\newcommand\xrowht[2][0]{\addstackgap[.5\dimexpr#2\relax]{\vphantom{#1}}}
\def\set@curr@file#1{\def\@curr@file{#1}} 
\newcommand{\D}{\mathcal{D}}
\newcommand{\pr}{\mathrm{Pr}}
\newcommand{\poly}{\mathrm{poly}}
\newcommand{\cvar}{C_{\mathrm{var}}}
\newcommand{\chyper}{C_{\mathrm{hyper}}}
\newcommand{\lridge}{\lambda_{\rm ridge}}
\newcommand{\todo}[1]{\textcolor{red}{\textbf{TODO: #1}}}
\newcommand{\R}{\mathbb{R}}
\newcommand{\E}{\mathbb{E}}
\newtheorem{theorem}{Theorem}
\newtheorem{lemma}{Lemma}
\theoremstyle{definition}
\newtheorem{assumption}{Assumption}
\title{An Exponential Lower Bound for Linearly-Realizable MDPs\\ with Constant Suboptimality Gap}
\author{Yuanhao Wang\thanks{Princeton University. Email: \texttt{yuanhao@princeton.edu}} \and Ruosong Wang\thanks{Carnegie Mellon University. Email: \texttt{ruosongw@andrew.cmu.edu}} \and Sham M. Kakade\thanks{University of Washington and Microsoft Research. Email: \texttt{sham@cs.washington.edu}}}
\begin{document}

\maketitle
\begin{abstract}%
  A fundamental question in the theory of reinforcement learning is:
  suppose the optimal $Q$-function lies in the linear span of a given
  $d$ dimensional feature mapping, is sample-efficient reinforcement
  learning (RL) possible? The recent and remarkable result of Weisz et
  al. (2020) resolved this question in the negative, providing an
  exponential (in $d$) sample size lower bound, which holds even if
  the agent has access to a generative model of the environment. One
  may hope that this information theoretic barrier for RL can be
  circumvented by further supposing an even more favorable assumption: 
  there exists a \emph{constant suboptimality gap} between the optimal
  $Q$-value of the best action and that of the second-best action (for
  all states). 
  The hope is that having a large suboptimality gap would permit
  easier identification of optimal actions themselves, thus making the
  problem tractable; indeed, provided the agent has access
  to a generative model, sample-efficient RL is in fact possible
  with the addition of this more favorable assumption.



  This work focuses on this question in the standard online reinforcement learning setting,
  where our main result resolves this question in the negative: our
  hardness result shows that an exponential sample complexity lower
  bound still holds even if a constant suboptimality gap is assumed in
  addition to having a linearly realizable optimal $Q$-function.
  Perhaps surprisingly, this implies an exponential separation between
  the online RL setting and the generative model
  setting. Complementing our negative hardness result, we give two
  positive results showing that provably sample-efficient RL is
  possible either under an additional low-variance assumption or under
  a novel hypercontractivity assumption (both implicitly place
  stronger conditions on the underlying dynamics model).
\end{abstract}

\section{Introduction}

There has been substantial recent theoretical interest in understanding
the means by which we can avoid the curse of dimensionality and obtain
 sample-efficient reinforcement learning (RL) methods~\citep{wen2017efficient, du2019provably,
  du2019good, wang2019optimism, yang2019sample, lattimore2020learning,
  yang2020reinforcement, jin2020provably, cai2020provably,
  zanette2020learning, weisz2020exponential, du2020agnostic,
  zhou2020provably, zhou2020nearly, modi2020sample, jia2020model,
  ayoub2020model}.  Here, the extant body of literature largely
focuses on sufficient conditions for efficient reinforcement
learning. Our understanding of what are the necessary
conditions for efficient reinforcement learning is far more limited.
With regards to the latter, arguably, the most
natural assumption is linear realizability: we assume that the optimal
$Q$-function lies in the linear span of a given feature map.  The goal
is to the obtain polynomial sample complexity under this linear realizability assumption alone.

This ``linear $Q^*$ problem'' was a major
open problem (see~\citet{du2019good} for discussion), and
a recent hardness result by~\citet{weisz2020exponential} provided a
negative answer.
In particular, the result showed that even with access to a generative model,
any algorithm requires an exponential number of samples (in the
dimension $d$ of the feature mapping) to find a
near-optimal policy, provided the action space has exponential size. 

With this question resolved, one may naturally ask what is the source of
hardness for the construction in~\citet{weisz2020exponential} and if  there
are additional assumptions 
that can serve to bypass the underlying source of this hardness.  
Here, arguably, it is most natural to further examine the
suboptimality gap in the problem, which
is the gap between the optimal $Q$-value of the best action and
that of the second-best action; the construction
in~\citet{weisz2020exponential} does in fact fundamentally rely on
having an exponentially small gap. Instead, if we assume the gap is lower bounded by a constant for all
states, we may hope that the problem is substantially easier since with a finite number
of samples (appropriately obtained), we can identify the optimal policy itself
(i.e. gap assumptions allows us to translate value-based accuracy to 
identification of the optimal policy itself).
In fact, this intuition is correct in the following sense: with a
generative model, it is not difficult to see that polynomial sample
complexity is possible under the linear realizability assumption plus
the suboptimality gap assumption, since the suboptimality gap
assumption allows us to easily identify an optimal action for all
states, thus making the problem tractable (see Section C
in~\citet{du2019good} for a formal argument).

More generally, the suboptimality gap assumption is widely discussed in the
bandit literature~\citep{dani2008stochastic, audibert2010best,
  abbasi2011improved} and the reinforcement learning
literature~\citep{simchowitz2019non, yang2020q} to obtain fine-grained
sample complexity upper bounds.  More specifically, under the realizability
assumption and the suboptimality gap assumption, it has been shown
that polynomial sample complexity is possible if the transition is
nearly deterministic~\citep{du2020agnostic} (also see~\citet{wen2017efficient}).  However, it remains unclear whether the
suboptimality gap assumption is sufficient to bypass the hardness
result in~\citet{weisz2020exponential}, or the same exponential lower
bound still holds even under the suboptimality gap assumption, when
the transition could be stochastic and the generative model is
unavailable.  For the construction in~\citet{weisz2020exponential}, at
the final stage, the gap between the value of the optimal action and
its non-optimal counterparts will be exponentially small, and
therefore the same construction does not imply an exponential sample
complexity lower bound under the suboptimality gap assumption.

\begin{table}[tb]
    \centering
    \begin{tabular}{|c|c|c|}
\hline
    \textbf{Minimum Gap?}    & \textbf{Generative Model} & \textbf{Online RL}\\ 
    \hline\xrowht[()]{12pt}
No & Exponential~\citep{weisz2020exponential}  & Exponential~\citep{weisz2020exponential} \\
 \hline\xrowht[()]{12pt}
    Yes & Polynomial~\citep{du2019good} & \textbf{Exponential (This work, Theorem~\ref{thm:lower})}\\
    \hline
    \end{tabular}
    \caption{Known sample complexity results for RL with linear function approximation under realizability. ``Exponential'' refers to exponential lower bound (in the dimension or horizon), while ``polynomial'' refers to a polynomial upper bound.}
    \label{tab:my_label}
\end{table}

\paragraph{Our contributions.}
Following~\citet{weisz2020exponential}, in this paper, we prove a new hardness result for online RL with linear realizability. 
In particular, we show that in the online RL setting where a generative model is unavailable (and is therefore a weaker setting) with exponential-sized action space, the exponential sample complexity lower bound still holds even under the suboptimality gap assumption.
Complementing our hardness result, we show that under the realizability assumption and the suboptimality gap assumption, our hardness result can be bypassed if one further assumes the low variance assumption in~\citet{du2019provably}~\footnote{We note that the sample complexity of the algorithm in~\citet{du2019provably} has at least linear dependency on the number of actions, which is not sufficient for bypassing our hardness results which assumes an exponential-sized action space. }, or a hypercontractivity assumption. Hypercontractive distributions include Gaussian distributions (with arbitrary covariance matrices), uniform distributions over hypercubes and strongly log-concave distributions~\citep{kothari2017better}. This condition has been shown powerful for outlier-robust linear regression~\citep{kothari2017outlier}, but has not yet been introduced for reinforcement learning with linear function approximation.

Our results have several interesting implications, which we discuss in detail in Section~\ref{sec:discussion}. Most notably, our results imply an \emph{exponential} separation between the standard reinforcement learning setting and the generative model setting.
Moreover, our construction enjoys greater simplicity, making it more suitable to be generalized for other RL problems or to be presented for pedagogical purposes.

\paragraph{Organization.} This paper is organized as follows. In Section~\ref{sec:related}, we review related works in literature. In Section~\ref{sec:pre}, we introduce necessary notations, definitions and assumptions. In Section~\ref{sec:lower}, we present our hardness result. In Section~\ref{sec:upper}, we present the upper bound. Finally we discuss implications of our results and open problems in Section~\ref{sec:discussion}.

\section{Related work}\label{sec:related}

\paragraph{Previous hardness results.}
Existing exponential lower bounds in RL~\citep{krishnamurthy2016pac, chen2019information} usually construct unstructured MDPs with an exponentially large state space.
\citet{du2019good} proved that under the approximate version of the realizability assumption, i.e., the optimal $Q$-function lies in the linear span of a given feature mapping approximately, any algorithm requires an exponential number of samples to find a near-optimal policy. 
The main idea in~\citet{du2019good} is to use the Johnson-Lindenstrauss lemma~\citep{johnson1984extensions} to construct a large set of near-orthogonal feature vectors. 
Such idea is later generalized to other settings, including those in~\citet{wang2020reward, kumar2020discor, van2019comments, lattimore2020learning}.
Whether the exponential lower bound still holds under the exact version of the realizability assumption is left as an open problem in~\citet{du2019good}.

The above open problem is recently solved by~\citet{weisz2020exponential}. They show that under the exact version of the realizability assumption, any algorithm requires an exponential number of samples to find a near-optimal policy assuming an exponential-sized action space. 
The construction in~\citet{weisz2020exponential} also uses the Johnson-Lindenstrauss lemma to construct a large set of near-orthogonal feature vectors, with additional subtleties to ensure exact realizability. 

Very recently, under the exact realizability assumption, strong lower bounds are proved in the offline setting~\citep{wang2020statistical, zanette2020exponential}. These work focus on the offline RL setting, where a fixed data distribution with sufficient coverage is given and the agent cannot interact with the environment in an online manner. Instead, we focus on the online RL setting in this paper. 

\paragraph{Existing upper bounds.}
For RL with linear function approximation, most existing upper bounds require representation conditions stronger than realizability.
For example, the algorithms in~\citet{yang2019sample, yang2020reinforcement, jin2020provably, cai2020provably, zhou2020provably, zhou2020nearly, modi2020sample, jia2020model, ayoub2020model} assume that the transition model lies in the linear span of a given feature mapping, and the algorithms in~\citet{wang2019optimism, lattimore2020learning, zanette2020learning} assume completeness properties of the given feature mapping. 
In the remaining part of this section, we mostly focus on previous upper bounds that requires only realizability as the representation condition. 

For deterministic systems, under the realizability assumption, \citet{wen2017efficient} provide an algorithm that achieves polynomial sample complexity.
Later, under the realizability assumption and the suboptimality gap assumption, polynomial sample complexity upper bounds are shown if the transition is deterministic~\citep{du2020agnostic}, a generative model is available~\citep{du2019good}, or a low-variance condition holds~\citep{du2019provably}.
Compared to the original algorithm in~\citet{du2019provably}, our modified algorithm in Section~\ref{sec:upper} works under a similar low-variance condition. However, the sample complexity in~\citet{du2019provably} has at least linear dependency on the number of actions, whereas our sample complexity in Section~\ref{sec:upper} has no dependency on the size of the action space. 
Finally,~\citet{shariff2020efficient} obtains a polynomial upper bound under the realizability assumption when the features for all state-action pairs are inside the convex hull of a polynomial-sized coreset and the generative model is available to the agent. 

\section{Preliminaries}\label{sec:pre}
Throughout a paper, we use $[N]$ to denote the set $\{1, 2, \ldots, N\}$.
For a set $S$,we use $\Delta_S$ to denote the probability simplex. 
\subsection{Markov decision process (MDP) and reinforcement learning}
An MDP is specified by $(\mathcal{S},\mathcal{A},H,P,\{R_h\}_{h\in [H]})$, where $\mathcal{S}$ is the state space, $\mathcal{A}$ is the action space with $|\mathcal{A}|=A$, $H\in\mathbb{Z}^{+}$ is the planning horizon, $P:\mathcal{S}\times\mathcal{A}\to\Delta_{\mathcal{S}}$ is the transition function and $R_h:\mathcal{S}\times\mathcal{A}\to\Delta_{\R}$ is the reward distribution. Throughout the paper, we occasionally abuse notation and use a scalar $a$ to denote the single-point distribution at $a$.

A (stochastic) policy takes the form $\pi=\{\pi_h\}_{h\in [H]}$, where each $\pi_h:\mathcal{S}\to\Delta_{\mathcal{A}}$ assigns a distribution over actions for each state. We assume that the initial state is drawn from a fixed distribution, i.e. $s_1\sim \mu$. Starting from the initial state, a policy $\pi$ induces a random trajectory $s_1,a_1,r_1,\cdots,s_H,a_H,r_H$ via the process $a_h\sim \pi_h(\cdot)$, $r_h\sim R(\cdot|s_h,a_h)$ and $s_{h+1}\sim P(\cdot|s_h,a_h)$. For a policy $\pi$, denote the distribution of $s_h$ in its induced trajectory by $\D^\pi_h$. 

Given a policy $\pi$, the $Q$-function (action-value function) is defined as $$Q_h^\pi(s,a):=\E\left[\sum_{h'=h}^H r_{h'}|s_h=s,a_h=a,\pi\right],$$ while $V_h^\pi(s):=\E_{a\sim \pi_h(s)}[Q_h^{\pi}(s,a)]$. We denote the optimal policy by $\pi^*$, and the associated optimal $Q$-function and value function by $Q^*$ and $V^*$ respectively. Note that $Q^*$ and $V^*$ can also be defined via the Bellman optimality equation\footnote{We additionally define $V_{H+1}(s)=0$ for all $s\in\mathcal{S}$.}:
\begin{align*}
    V_h^*(s) &= \max_{a\in\mathcal{A}}Q_h^*(s,a),\\
    Q_h^*(s,a)&= \E\left[R_h(s,a)+V^*_{h + 1}(s_{h+1})|s_h=s,a_h=a\right].
\end{align*}

\paragraph{The online RL setting.}
In this paper, we aim to prove lower bound and upper bound in the online RL setting.
In this setting, in each episode, the agent interacts with the unknown environment using a policy and observes rewards and the next states.
We remark that the hardness result by~\citet{weisz2020exponential} operates in the setting where a generative model is available to the agent so that the agent can transit to any state.
Also, it is known that with a generative model, under the linear realizability assumption plus the suboptimality gap assumption,
one can find a near-optimal policy with polynomial number of samples  (see Section C in~\citet{du2019good} for a formal argument).

\subsection{Linear $Q^\star$  function approximation}
When the state space is large or infinite, structures on the state space are necessary for efficient reinforcement learning. In this work we consider linear function approximation. Specifically, there exists a feature map $\phi:\mathcal{S}\times\mathcal{A}\to\R^d$, and we will use linear functions of $\phi$ to represent $Q$-functions of the MDP. To ensure that such function approximation is viable, we assume that the optimal $Q$-function is realizable. 
\begin{assumption}[Realizability]
\label{assumption:realizability}
For all $h\in [H]$, there exists $\theta^*_h\in\R^d$ such that for all $(s, a)\in \mathcal{S} \times \mathcal{A}$, $Q^*_h(s,a)=\phi(s,a)^{\top}\theta^*_h$.
\end{assumption}
This assumption is widely used in existing reinforcement learning and contextual bandit literature~\citep{du2019provably,foster2020beyond}. However, even for linear function approximation, realizability alone is not sufficient for sample-efficient reinforcement learning ~\citep{weisz2020exponential}.

In this work, we also impose the regularity condition that $\Vert \theta^*_h\Vert_2=O(1)$ and $\Vert\phi(s,a)\Vert_2=O(1)$, which can always be achieved via rescaling.

Another assumption that we will use is that the minimum suboptimality gap is lower bounded.
\begin{assumption}[Minimum Gap]
\label{assumption:gap}
For any state $s\in \mathcal{S}$, $a\in\mathcal{A}$, the suboptimality gap is defined as $\Delta_h(s,a):=V_h^*(s)-Q_h^*(s,a)$. We assume that $\min_{h\in [H],s\in\mathcal{S},a\in\mathcal{A}}\left\{\Delta_h(s,a):\Delta_h(s,a)>0\right\}\ge \Delta_{\min}$.
\end{assumption}
As mentioned in the introduction, this assumption is common in bandit and reinforcement learning literature. 

\section{Hard Instance with Constant Suboptimality Gap}
\label{sec:lower}
We now present our main hardness result.  

\begin{theorem}\label{thm:lower}
There exist universal constants $c_0, c_1 > 0 $ such
that the following statement holds. Consider any online RL algorithm $\mathbb{A}$ that takes the feature mapping $\phi : \mathcal{S} \times \mathcal{A} \to \mathbb{R}^d$ as input.
In the online RL setting, there exists an MDP with a feature mapping
$\phi$ satisfying Assumption~\ref{assumption:realizability} and
Assumption~\ref{assumption:gap} with $\Delta_{\min}\geq c_0$, 
such that $\mathbb{A}$ requires $2^{c_1 \cdot \min\{ d,H\} }$ samples to find a policy $\pi$ with
\begin{equation*}
   \E_{s_1\sim\mu}V^{\pi}(s_1)\ge \E_{s_1\sim\mu}V^*(s_1)-0.05
\end{equation*}
with probability greater than $0.1$ (or $\mathbb{A}$ fails to return such a policy
$\pi$ with probability greater than $0.1$).
\end{theorem}

\begin{figure}
    \centering
    \includegraphics[width=0.4\textwidth]{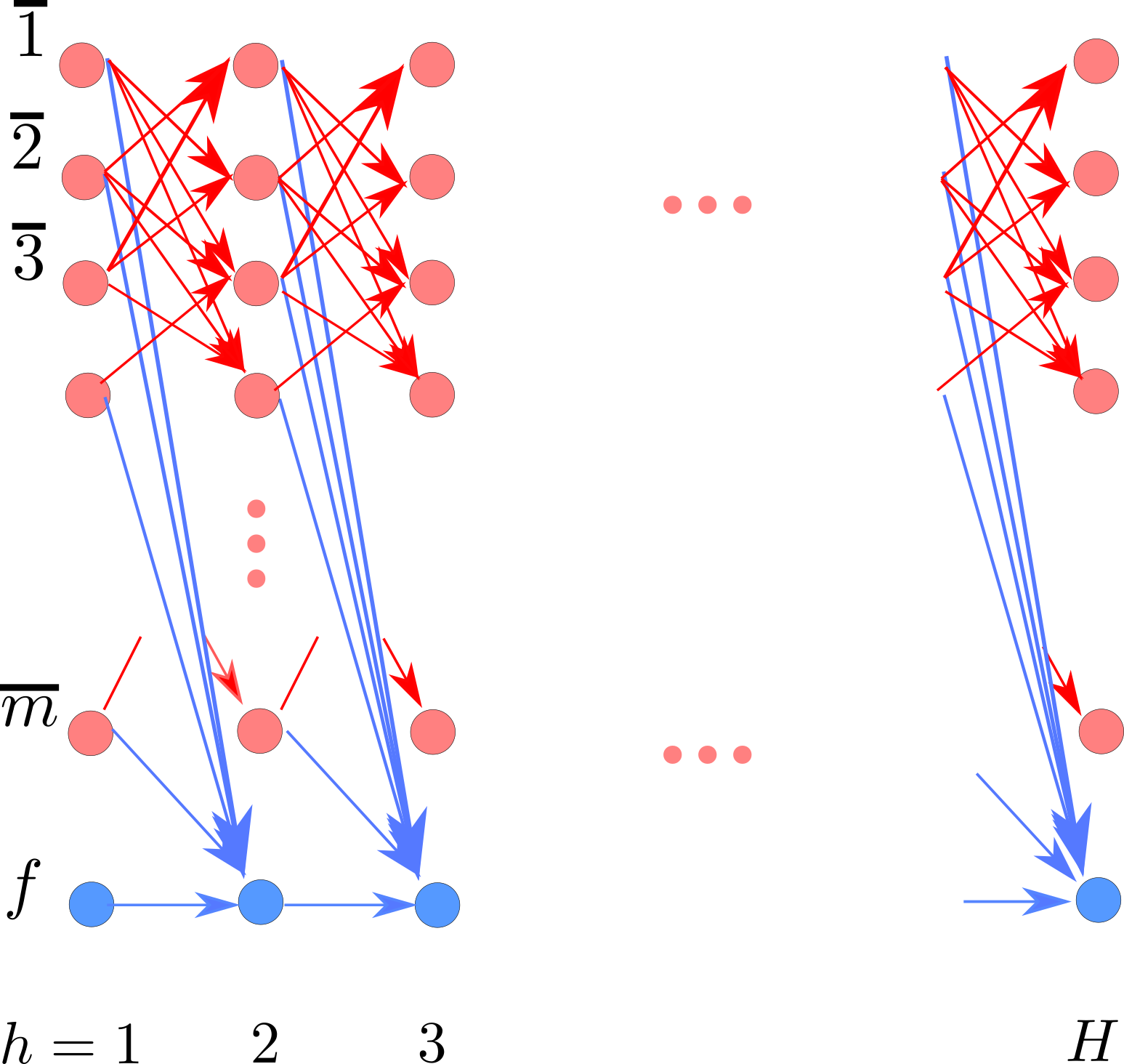}
    \caption{\textbf{The Leaking Complete Graph Construction.} Illustration of a hard MDP. There are $m+1$ states in the MDP, where $f$ is an absorbing terminal state. Starting from any non-terminal state $\overline{a}$, regardless of the action, there is at least $\gamma=1/6$ probability that the next state will be $f$.}
    \label{fig:mdp}
\end{figure}

The remainder of this section provides the construction of a hard
family of MDPs where $Q^*$ is linearly realizable and has constant
suboptimality gap and where it takes exponential samples to learn a
near-optimal policy.  Each of these hard MDPs can roughly
be seen as a ``leaking complete graph'' (see
Fig.~\ref{fig:mdp}). Information about the optimal policy can only be
gained by: (1) taking the optimal action; (2) reaching a non-terminal
state at level $H$. We will show that when there are exponentially
many actions, either events happen with negligible probability unless
exponentially many trajectories are played.

\subsection{Construction of the MDP family}
In this subsection we describe the construction of the hard instance (the hard MDP family) in detail.
Let $m$ be an integer to be determined. The state space is $\{\bar{1},\cdots,\bar{m},f\}$. The special state $f$ is called the \emph{terminal state}. 
The action space is simply $[m]$.
Each MDP in this family is specified by an index $a^*\in[m]$ and denoted by $\mathcal{M}_{a^*}$. In other words, there are $m$ MDPs in this family.

In order to construct the MDP family, we first find a set of approximately orthogonal vectors by leveraging the Johnson-Lindenstrauss lemma~\citep{johnson1984extensions}.
\begin{lemma}[Johnson-Lindenstrauss]
\label{lemma:jl}
For any $\gamma>0$, if $m\le \exp(\frac{1}{8}\gamma^2d')$, there exists $m$ unit vectors $\{v_1,\cdots,v_m\}$ in $\R^{d'}$ such that $\forall i,j\in[m]$ such that $i\neq j$, $|\langle v_i,v_j\rangle|\le \gamma$.
\end{lemma}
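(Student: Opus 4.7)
The plan is to prove this by the probabilistic method: draw $m$ vectors independently and uniformly from the unit sphere $S^{d'-1} \subset \R^{d'}$, and show that with nonzero probability all pairs of distinct vectors have inner product bounded by $\gamma$ in absolute value. Equivalently, I would sample i.i.d.\ standard Gaussian vectors $g_1, \dots, g_m \sim \mathcal{N}(0, I_{d'})$ and set $v_i = g_i / \|g_i\|$, which is often more convenient for concentration arguments.

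The key ingredient is a concentration bound on $\langle v_i, v_j\rangle$ for fixed $i \ne j$. By rotational invariance, $\langle v_i, v_j\rangle$ is distributed as the first coordinate of a uniformly random unit vector in $\R^{d'}$, so its absolute value has exponentially decaying tails. Concretely, I would show a bound of the form $\pr[|\langle v_i, v_j\rangle| > \gamma] \le 2\exp(-\gamma^2 d'/c)$ for a suitable constant $c$; this can be derived either directly from the known density of the first coordinate of the uniform measure on $S^{d'-1}$ (a Beta-type distribution), or by using the Gaussian representation together with standard bounds on $|\langle g_i, g_j\rangle|$ and $\|g_i\|, \|g_j\|$.

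Given the per-pair tail bound, I would union bound over the $\binom{m}{2} \le m^2/2$ pairs to obtain
\begin{equation*}
\pr\bigl[\exists i\ne j:\, |\langle v_i, v_j\rangle| > \gamma\bigr] \le m^2 \exp(-\gamma^2 d'/c).
\end{equation*}
Taking $m \le \exp(\gamma^2 d'/8)$ makes the right-hand side strictly less than $1$ (for appropriate $c$), so there must exist a realization of $v_1, \dots, v_m$ with the desired pairwise near-orthogonality property. The vectors are already unit vectors by construction.

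The main obstacle is bookkeeping with the constants so the final threshold comes out to exactly $\exp(\tfrac{1}{8}\gamma^2 d')$ as stated; in particular, the per-pair bound must be strong enough that, after paying a factor of $m^2$ in the union bound, we retain a subgaussian tail with constant at most $1/8$ in the exponent. This is a routine but careful computation; no deeper idea is needed beyond the standard concentration of inner products of random unit vectors in high dimension.
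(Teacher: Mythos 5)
The paper does not actually prove this lemma: it is stated as a black-box consequence of the Johnson--Lindenstrauss lemma with a citation to \citet{johnson1984extensions}, so there is no in-paper argument to compare against. Your probabilistic-method proof is correct and is the standard way to establish this statement. The one place that needs care is the per-pair tail bound, and it does work out: for $v_i, v_j$ independent and uniform on $S^{d'-1}$, rotational invariance reduces $\langle v_i, v_j\rangle$ to the first coordinate of a uniform unit vector, and the spherical cap estimate gives $\pr[|\langle v_i,v_j\rangle| \ge \gamma] \le 2e^{-d'\gamma^2/2}$. The union bound over $\binom{m}{2}$ pairs then yields a failure probability below $m^2 e^{-d'\gamma^2/2} \le e^{-d'\gamma^2/4} < 1$ when $m \le \exp(\tfrac18\gamma^2 d')$, so the constant $\tfrac18$ in the statement is comfortably achievable (indeed $\tfrac14$ would already suffice with this argument), and the $m=1$ edge case is vacuous. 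Your alternative Gaussian-normalization route also works but requires separately controlling $\|g_i\|$ from below, so the direct spherical-cap version is the cleaner instantiation of your plan.
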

We will set $\gamma=\frac{1}{6}$ and $m=\lfloor \exp(\frac{1}{8}\gamma^2d)\rfloor$. By Lemma~\ref{lemma:jl}, we can find such a set of $d$-dimensional unit vectors $\{v_1,\cdots,v_m\}$. For the clarity of presentation, we will use $v_i$ and $v(i)$ interchangeably. The construction of $\mathcal{M}_{a^*}$ is specified below.
Note that in our construction, the features, the rewards and the transitions are defined for all $\overline{a_1},a_2$ with $a_1,a_2\in [m]$ and $a_1 \neq a_2$.
In particular, our construction is properly defined even when $a_1 = a^*$. 

\paragraph{Features.} The feature map, which maps state-action pairs to $d + 1$ dimensional vectors, is defined as follows. 
\begin{align*}
     \phi(\overline{a_1},a_2) &:= \left(0,\left(\Big\langle v(a_1),v(a_2)\Big\rangle+2\gamma\right)\cdot v(a_2)\right), \tag*{($\forall a_1,a_2\in [m], a_1\neq a_2$)}\\
   \phi(\overline{a_1},a_1) &:= \left(\frac{3}{4}\gamma,\mathbf{0}\right), \tag*{($\forall a_1\in [m]$)} \\
   \phi(f,1) &= \left(0,\mathbf{0}\right),\\
   \phi(f,a) &:= \left(-1,\mathbf{0}\right).\tag*{($\forall a\neq 1$)}
\end{align*}
	Here $\mathbf{0}$ is the zero vector in $\mathbb{R}^d$. 
    Note that the feature map is independent of $a^*$ and is shared across the MDP family. 

\paragraph{Rewards.} For $1\le h<H$, the rewards are defined as
    \begin{align*}
        R_h(\overline{a_1},a^*) &:= \Big\langle v(a_1), v(a^*)\Big\rangle+2\gamma,\tag*{($a_1\neq a^*$)}\\
        R_h(\overline{a_1},a_2) &:= -2\gamma\left[\Big\langle v(a_1),v(a_2)\Big\rangle+2\gamma\right], \tag*{($a_2\neq a^*$, $a_2\neq a_1$)}\\
        R_h(\overline{a_1},a_1) &:= \frac{3}{4}\gamma,\tag*{($\forall a_1$)}\\
        R_h(f,1) &:= 0, \\
        R_h(f,a) &:= -1. \tag*{($a\neq 1$)}
    \end{align*}
    For $h=H$, $r_H(s,a):=\langle \phi(s,a),  (1,  v(a^*))\rangle$ for every state-action pair.

\paragraph{Transitions.} The initial state distribution $\mu$ is set as a uniform distribution over $\{\bar{1},\cdots,\bar{m}\}$. The transition probabilities are set as follows.
\begin{align*}
\Pr[f|\overline{a_1},a^*] &= 1,\\
\Pr[f|\overline{a_1},a_1] &= 1,\\
    \Pr[\cdot|\overline{a_1},a_2] &= \begin{cases}
    \overline{a_2}:\Big\langle v(a_1), v(a_2)\Big\rangle+2\gamma \\
    f:1-\Big\langle v(a_1), v(a_2)\Big\rangle-2\gamma
    \end{cases},\tag*{($a_2\neq a^*$, $a_2\neq a_1$)}\\
\Pr[f|f,\cdot] &= 1.
\end{align*}

After taking action $a_2$, the next state is either $\overline{a_2}$ or $f$. Thus this MDP looks roughly like a ``leaking complete graph'' (see Fig.~\ref{fig:mdp}): starting from state $\overline{a}$, it is possible to visit any other state (except for $\overline{a^*}$); however, there is always at least $1-3\gamma$ probability of going to the terminal state $f$. The transition probabilities are indeed valid, because
$$0<\gamma\le \Big\langle v(a_1), v(a_2)\Big\rangle+2\gamma\le 3\gamma<1.$$
We now verify that linear realizability, i.e. Assumption~\ref{assumption:realizability}, is satisfied.
\begin{lemma}[Linear realizability]\label{lem:realizability}
In the MDP $\mathcal{M}_{a^*}$, $\forall h\in [H]$, for any state-action pair $(s,a)$, $Q^*_h(s,a) = \langle \phi(s,a),\theta^*\rangle$ with $\theta^* = (1,  v(a^*))$. 
\end{lemma}
\begin{proof}
We first verify the statement for the terminal state $f$. Observe that at the terminal state $f$, the next state is always $f$ and the reward is either $0$ (if action $1$ is chosen) or $-1$ (if an action other than $1$ is chosen). 
Hence, we have 
\[
Q_h^*(f, a) = \begin{cases}
0 & a  = 1\\
-1 & a \neq 1
\end{cases}
\]
and
\[
V_h^*(f)=0.
\]
This implies $Q_h^*(f,\cdot)=\langle\phi(f,\cdot),(1,  v(a^*)) \rangle $.

We now verify realizability for other states via induction on $h=H,H-1,\cdots,1$. The induction hypothesis is that for all $a_1, a_2 \in [m]$, we have 
\begin{equation}\label{eq:qstar}
    Q_h^*(\overline{a_1}, {a_2}) = \begin{cases}
    	\left(\left\langle v(a_1),v(a_2)\right\rangle+2\gamma\right)\cdot \left\langle v(a_2), v(a^*)\right\rangle & a_1 \neq a_2\\
	\frac{3}{4}\gamma  & a_1 = a_2
   \end{cases}
   \end{equation}
and
\begin{equation}\label{eq:vstar}
V_h^*(\overline{a_1}) = \begin{cases}
    	\left\langle v(a_1), v(a^*) \right\rangle + 2\gamma & a_1 \neq a^* \\
	\frac{3}{4}\gamma  & a_1 = a^*
   \end{cases}.
   \end{equation}

Note that (\ref{eq:qstar}) implies that realizability is satisfied. In the remaining part of the proof we verify Eq. (\ref{eq:qstar}) and (\ref{eq:vstar}). 

When $h=H$, (\ref{eq:qstar}) holds by the definition of rewards. Next, note that for all $h \in [H]$, (\ref{eq:vstar}) follows from (\ref{eq:qstar}). This is because for all $a_1\neq a^*$, for all $a_2\notin \{ a_1, a^*\}$.
\begin{align*}
    Q_h^*(\overline{a_1},{a_2}) &= \left(\left\langle v(a_1),v(a_2)\right\rangle+2\gamma\right)\cdot \left\langle v(a_2), v(a^*)\right\rangle\le 3\gamma^2,
\end{align*}
Moreover, for all $a_1\neq a^*$,
\begin{align*}
    Q_h^*(\overline{a_1},{a_1}) &= \frac{3}{4}\gamma < \gamma.
\end{align*}
Furthermore, for all $a_1\neq a^*$,
\begin{align*}
    Q_h^*(\overline{a_1},{a^*}) = \left\langle v(a_1), v(a^*) \right\rangle + 2\gamma \ge \gamma > 3\gamma^2.
\end{align*}
In other words, (\ref{eq:qstar}) implies that $a^*$ is always the optimal action for all state $\overline{a_1}$ with $a_1 \neq a^*$. 
Now, for state $\overline{a^*}$, for all $a \neq a^*$, we have
\begin{align*}
    Q_h^*(\overline{a^*}, a) &= \left(\left\langle v(a^*),v(a)\right\rangle+2\gamma\right)\cdot \left\langle v(a^*), v(a)\right\rangle\le 3\gamma^2 < \frac{3}{4}\gamma = Q_h^*(\overline{a^*}, a^*).
\end{align*}
Hence, (\ref{eq:qstar}) implies that $a^*$ is always the optimal action for all states $\overline{a}$ with $a \in [m]$.

Thus, it remains to show that (\ref{eq:qstar}) holds for $h$ assuming (\ref{eq:vstar}) holds for $h+1$. 
Here we only consider the case that $a_2 \neq a_1$ and $a_2 \neq a^*$, since otherwise $\Pr[f|\overline{a_1},a_2] = 1$ and thus (\ref{eq:qstar}) holds by the definition of the rewards and the fact that $V_h^*(f) = 0$. When $a_2 \notin \{a_1, a^*\}$, we have
\begin{align*}
    Q_h^*(\overline{a_1},{a_2}) &= R_h(\overline{a_1},a_2) + \E_{s_{h+1}}\left[\left.V_{h+1}^*(s_{h+1})\right|\overline{a_1},a_2\right]\\
    &=-2\gamma\left[\Big\langle v(a_1),v(a_2)\Big\rangle+2\gamma\right] + \Pr[s_{h+1}=\overline{a_2}]\cdot V_{h+1}^*(a_2)+\Pr[s_{h+1}=f]\cdot V_{h+1}^*(f)\\
    &=-2\gamma\left[\Big\langle v(a_1),v(a_2)\Big\rangle+2\gamma\right] + \left[\Big\langle v(a_1),v(a_2)\Big\rangle+2\gamma\right]\cdot \left( \Big\langle v(a_1), v(a^*) \Big\rangle + 2\gamma\right)\\
    &=\left(\Big\langle v(a_1),v(a_2)\Big\rangle+2\gamma\right)\cdot  \Big\langle v(a_1), v(a^*) \Big\rangle.
\end{align*}
This is exactly (\ref{eq:qstar}) for $h$. Hence both (\ref{eq:qstar}) and (\ref{eq:vstar}) hold for all $h\in [H]$.
\end{proof}

We now verify that the minimum suboptimality gap $\Delta_{\min}$ is lower bounded.
\begin{lemma}[Constant gap]
Assumption~\ref{assumption:gap} is satisfied with  $\Delta_{\min}= \frac{1}{24}$.
\end{lemma}
\begin{proof}
From Eq. (\ref{eq:qstar}) and (\ref{eq:vstar}), it is easy to see that at state $\overline{a_1}\neq \overline{a^*}$, for $a_2\neq a^*$, the suboptimality gap is
\begin{align*}
    \Delta_h(\overline{a_1},a_2) := V_h^*(\overline{a_1}) - Q^*_h(\overline{a_1}, a_2) \ge \gamma - \max\left\{ 3\gamma^2,\frac{3}{4}\gamma\right\} = \frac{1}{24}.
\end{align*}
Moreover, at state $\overline{a^*}$, for $a\neq a^*$, the suboptimality gap is
\begin{align*}
    \Delta_h(\overline{a^*}, a) := V_h^*(\overline{a^*}) - Q^*_h(\overline{a^*}, a) \ge \frac{3}{4}\gamma - 3\gamma^2= \frac{1}{24}.
\end{align*}
Finally, for the terminal state $f$, the suboptimality gap is obviously $1$. Therefore $\Delta_{\min}\ge \frac{1}{24}$ for all MDPs under consideration. 
\end{proof}


\subsection{The information-theoretic argument}
Now we are ready to state and prove our main technical lemma.
\begin{lemma}
\label{lem:lower_formal}
For any algorithm, there exists $a^*\in [m]$ such that in order to output $\pi$ with
\begin{equation*}
   \E_{s_1\sim\mu}V^{\pi}(s_1)\ge \E_{s_1\sim\mu}V^*(s_1)-0.05
\end{equation*}
with probability at least $0.1$ for $\mathcal{M}_{a^*}$, the number of samples required is $2^{\Omega(\min\{d,H\})}$.
\end{lemma}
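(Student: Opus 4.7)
By Yao's minimax principle, it suffices to fix a deterministic algorithm, place the uniform prior $a^*\sim\mathrm{Unif}([m])$, and bound the probability --- jointly over $a^*$ and the $N$-episode interaction --- that the output policy $\pi$ is $0.05$-optimal for $\mathcal{M}_{a^*}$. The plan is to combine an information-theoretic coupling step with a per-policy counting argument.

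\emph{Information-theoretic step.} Define the \emph{informative event} $\mathcal{I}$: across the $N$ episodes, either (i) action $a^*$ is played at some non-terminal state at some step $h<H$, or (ii) some trajectory reaches step $H$ at a non-terminal state. Inspecting the construction, for $h<H$ and $a_2\ne a^*$ the reward $R_h(\overline{a_1},a_2)$ and transition $\Pr[\cdot\mid\overline{a_1},a_2]$ depend only on $v(a_1),v(a_2)$, while the rewards and transitions at the terminal state $f$ are always $0$ and deterministic; only the step-$H$ reward and the special rules for $a_2=a^*$ see $a^*$. Consequently the trajectories under the $m$ MDPs can be coupled on a shared source of randomness so that they agree on $\mathcal{I}^c$, whence $\pi$ is distributionally independent of $a^*$ on $\mathcal{I}^c$. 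To bound $\Pr[\mathcal{I}]$: event (ii) has probability at most $(3\gamma)^{H-1}=(3/4)^{H-1}$ per episode, since each non-$a^*$ step leaves the set of non-terminal states with probability $\ge 1-3\gamma$; and event (i) has probability $\le H/(m-1)$ per episode, averaged over uniform $a^*$, since the $\le H$ actions played before $\mathcal{I}$ are $a^*$-independent under the coupling. A union bound over episodes yields $\E_{a^*}[\Pr_{\mathcal{M}_{a^*}}[\mathcal{I}]] \le N\bigl(H/(m-1)+(3/4)^{H-1}\bigr)\le 0.05$ whenever $N=2^{o(\min\{d,H\})}$.

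\emph{Value-gap step.} Conditional on $\mathcal{I}^c$, treat $\pi$ as a fixed deterministic policy (by further conditioning on the algorithm's coins and taking expectation at the end); I claim that only $O(1)$ values of $a^*$ can make $\pi$ be $0.05$-optimal for $\mathcal{M}_{a^*}$. Bellman's equation gives, for any $a_1\ne a^*$ with $\pi_1(\overline{a_1})=a_2\ne a^*$,
\begin{equation*}
V^{\pi}_1(\overline{a_1}) \;=\; \gamma_1\bigl(V^\pi_2(\overline{a_2})-2\gamma\bigr) \;\le\; 3\gamma\cdot(3\gamma-2\gamma) \;=\; 3\gamma^2 \;=\; \tfrac{3}{16},
\end{equation*}
using $\gamma_1:=\langle v(a_1),v(a_2)\rangle+2\gamma\in[\gamma,3\gamma]$ and $V^\pi_2\le V^*_2\le 3\gamma$. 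Since $V^*(\overline{a_1})\ge\gamma=\tfrac14$ whenever $a_1\ne a^*$, each such $a_1$ for which $\pi_1(\overline{a_1})\ne a^*$ contributes at least $\gamma-3\gamma^2=\tfrac{1}{16}$ to $V^*(\overline{a_1})-V^\pi(\overline{a_1})$. Writing $k(a^*):=|\{a_1\ne a^*: \pi_1(\overline{a_1})=a^*\}|$ and averaging over $\mu$, the condition $\E_{s_1\sim\mu}[V^*-V^\pi]\le 0.05$ therefore forces $k(a^*)=\Omega(m)$. But $\sum_{a^*}k(a^*)=m$ exactly, so this holds for at most $O(1)$ values of $a^*$. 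Hence $\Pr_{a^*}[\pi\text{ is }0.05\text{-optimal}\mid\mathcal{I}^c]\le O(1/m)$, and combining gives total success probability at most $0.05+O(1/m)<0.1$ for large enough $m=\lfloor e^{\gamma^2 d/8}\rfloor$.

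\emph{Main obstacle.} The delicate point is justifying the ``$\pi$ is independent of $a^*$ on $\mathcal{I}^c$'' claim rigorously across adaptively chosen $N$-episode interactions; the clean approach is to realize all $m$ MDPs on a common source of randomness and verify that only $a^*$-free rules are consulted outside $\mathcal{I}$. Secondary technicalities --- the boundary case $s_1=\overline{a^*}$ (occurring with probability $1/m$) and the auxiliary-dimension fix of Remark~\ref{remark} needed to enforce Assumption~\ref{assumption:gap} exactly --- only contribute $o(1)$ lower-order corrections that do not affect the bound.
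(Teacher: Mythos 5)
Your proposal is correct and follows the same strategy as the paper's proof: identify that information about $a^*$ leaks only through playing $a^*$ or surviving to a non-terminal state at step $H$, bound the latter by $(3/4)^{H}$ per episode, and pigeonhole over the $m$ candidate values of $a^*$ to find one that is hit with probability at most $O(NH/m)$ under an $a^*$-independent reference process (the paper realizes your coupling via an explicit reference MDP $\mathcal{M}_0$ and an induction on the conditional law of the interaction, which is the rigorous form of the step you flag as the main obstacle). The one structural difference is the final step: you argue that, conditional on the uninformative event, the output policy is determined by $a^*$-free randomness, and then double-count $\sum_{a^*} k(a^*) = m$ over the level-$1$ action map to show at most $O(1)$ values of $a^*$ can make any fixed policy $0.05$-optimal; the paper instead converts ``$\pi$ is $0.05$-optimal with probability $0.1$'' into ``executing $\pi$ for one more episode plays $a^*$ with probability $\Theta(1)$'' via the gap bound $\Pr[a_1 = a^*] \ge 0.2 - 1/m$, and reuses the already-established bound on the probability of ever playing $a^*$. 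Your counting argument buys a self-contained statement about fixed policies (and extends to stochastic output policies with $k(a^*) := \sum_{a_1 \neq a^*} \pi_1(a^* \mid \overline{a_1})$), while the paper's reduction avoids a second averaging over the prior; both close the argument, and your lower-order caveats (the $s_1 = \overline{a^*}$ boundary case, the auxiliary dimension of Remark~\ref{remark}) are handled the same way in the paper.
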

We provide a proof sketch for the lower bound below. The full proof can be found in Appendix~\ref{sec:lowerproof}.
Our main result, Theorem~\ref{thm:lower}, is a direct consequence of Lemma~\ref{lem:lower_formal}. 
\paragraph{Proof sketch.} 
Observe that the feature map of $\mathcal{M}_{a^*}$ does not depend on $a^*$, and that for $h<H$ and $a_2\neq a^*$, the reward $R_h(\overline{a_1},a_2)$ also contains no information about $a^*$. The transition probabilities are also independent of $a^*$, unless the action $a^*$ is taken. Moreover, the reward at state $f$ is always $0$. Thus, to receive information about $a^*$, the agent either needs to take the action $a^*$, or be at a non-game-over state at the final time step ($h=H$).

However, note that the probability of remaining at a non-terminal state at the next layer is at most
$$\sup_{a_1\neq a_2}\langle v(a_1), v(a_2)\rangle+2\gamma \le 3\gamma\le \frac{3}{4}.$$
Thus for any algorithm, $\Pr[s_H\neq f]\le \left(\frac{3}{4}\right)^H$, which is exponentially small.

In other words, any algorithm that does not know $a^*$ either needs to ``be lucky'' so that $s_H=f$, or needs to take $a^*$ ``by accident''. Since the number of actions is $m=2^{\Theta(d)}$, either event cannot happen with constant probability unless the number of episodes is exponential in $\min\{d,H\}$. 

In order to make this claim rigorous, we can construct a reference MDP $\mathcal{M}_0$ as follows. The state space, action space, and features of $\mathcal{M}_0$ are the same as those of $\mathcal{M}_a$. The transitions are defined as follows:
    \begin{align*}
        \Pr[\cdot|\overline{a_1},a_2] &= \begin{cases}
        \overline{a_2}:\Big\langle v(a_1), v(a_2)\Big\rangle+2\gamma \\
        f:1-\Big\langle v(a_1), v(a_2)\Big\rangle-2\gamma
        \end{cases},\tag*{($\forall a_1, a_2$ s.t. $a_1\neq a_2$)}\\
    \Pr[f|f,\cdot] &= 1.
    \end{align*}
The rewards are defined as follows:
\begin{align*}
     R_h(\overline{a_1},a_2) &:= -2\gamma\left[\Big\langle v(a_1),v(a_2)\Big\rangle+2\gamma\right], \tag*{( $\forall a_1, a_2$ s.t. $a_1\neq a_2$)}\\
        R_h(f,\cdot) &:= 0.
\end{align*}
Note that $\mathcal{M}_0$ is identical to $\mathcal{M}_{a^*}$, except when $a^*$ is taken, or when an trajectory ends at a non-terminal state. Since the latter event happens with an exponentially small probability, we can show that for any algorithm the probability of taking $a^*$ in $\mathcal{M}_{a^*}$ is close to the probability of taking $a^*$ in $\mathcal{M}_0$. Since $\mathcal{M}_0$ is independent of $a^*$, unless an exponential number of samples are used, for any algorithm there exists $a^*\in[m]$ such that the probability of taking $a^*$ in $\mathcal{M}_0$ is $o(1)$. It then follows that the probability of taking $a^*$ in $\mathcal{M}_{a^*}$ is $o(1)$. Since $a^*$ is the optimal action for every state, such an algorithm cannot output a near-optimal policy for $\mathcal{M}_{a^*}$.

\section{Upper Bound}
\label{sec:upper}
Theorem~\ref{thm:lower} suggests that Assumption~\ref{assumption:realizability} and Assumption~\ref{assumption:gap} are not sufficient for sample-efficient RL when the number of actions could be exponential, and that additional assumptions are needed to achieve polynomial sample complexity. One style of assumption is via assuming a global representation property on the features, such as completeness~\citep{zanette2020learning}. 

In this section, we consider two assumptions on additional structures on the transitions of the MDP rather than the feature representation that enable good rates for linear regression with sparse bias. The first condition is a variant of the low variance condition in~\citet{du2019provably}. 
\begin{assumption}[Low variance condition]
\label{assumption:var}
There exists a constant $1\le \cvar< \infty$ such that for any $h\in[H]$ and any policy $\pi$, 
\begin{equation*}
    \E_{s\sim \D^\pi_h}\left[\left|V^\pi(s)-V^*(s)\right|^2\right]\le \cvar\cdot\left( \E_{s\sim \D^\pi_h}\left[\left|V^\pi(s)-V^*(s)\right|\right]\right)^2.
\end{equation*}
\end{assumption}

We also consider an alternative assumption where feature distribution is hypercontractive.
\begin{assumption}
\label{assumption:hypercontractive}
(Hypercontractivity of $\phi$)
There exists a constant $1\le \chyper<\infty$ such that for any $h\in [H]$ and any policy $\pi$, the distribution of $\phi(s,a)$ with $(s,a)\sim \D^\pi_h$ is $(\chyper,4)$-hypercontractive. In other words, $\forall \pi$, $\forall h\in [H]$, $\forall v\in\R^d$, 
\begin{equation*}
    \E_{(s,a)\sim \D^\pi_h}\left[(\phi(s,a)^\top v)^4\right]\le \chyper\cdot\left(\E_{(s,a)\sim \D^\pi_h}[(\phi(s,a)^\top v)^2]\right)^2.
\end{equation*}
\end{assumption}
Intuitively, hypercontractivity characterizes the anti-concentration of a distribution. A broad class of distributions are hypercontractive with $\chyper=O(1)$, including Gaussian distributions (of arbitrary covariance matrices), uniform distributions over the hypercube and sphere, and strongly log-concave distributions~\citep{kothari2017outlier}. Hypercontractivity has been previously used for outlier-robust linear regression~\citep{klivans2018efficient,bakshi2020robust} and moment-estimation~\citep{kothari2017outlier}.

We show that under Assumptions~\ref{assumption:realizability},~\ref{assumption:gap},~\ref{assumption:var} or ~\ref{assumption:realizability},~\ref{assumption:gap},~\ref{assumption:hypercontractive}, a modified version of the Difference Maximization Q-learning (DMQ) algorithm~\citep{du2019provably} is able to learn a near-optimal policy using polynomial number of trajectories with no dependency on the number of actions.

\subsection{Optimal experiment design}
\label{sec:gopt}
Given a set of $d$-dimensional vectors, (optimal) experiment design aims at finding a distribution $\rho$ over the vectors such that when sampling from this distribution, linear regression performs optimally based on some criteria. In this paper, we will use the G-optimality criterion, which minimizes the maximum prediction variance over the set. The following lemma on G-optimal design is a  direct corollary of the Kiefer-Wolfowitz theorem~\citep{kiefer1960equivalence}.
\begin{lemma}[Existence of G-optimal design]
\label{lemma:oldgopt}
For any set $X\subseteq \R^d$, there exists a distribution $\mathcal{K}_X$ supported on $X$, known as the G-optimal design, such that
\begin{equation*}
    \max_{x\in X}x^{\top}\left(\E_{z\sim \mathcal{K}_X}zz^{\top}\right)^{-1}x\le d.
\end{equation*}
\end{lemma}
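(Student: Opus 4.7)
The plan is to derive the stated bound from the Kiefer--Wolfowitz equivalence theorem, which I would prove from scratch by analyzing the D-optimal design problem (maximizing $\log\det$ of the second-moment matrix) and then reading off the G-optimality bound from its first-order conditions. First I would reduce to the regular case where $X$ is compact and spans $\R^d$: if $X$ does not span $\R^d$, work inside the subspace it spans and replace $d$ by the intrinsic dimension; if $X$ is unbounded, the quantity $x^\top M^{-1} x$ is scale-homogeneous of degree two in $x$ but also the weight put on $x$ is homogeneous of degree zero in the objective's nontrivial structure, so one can restrict to a bounded subset (or pass to the closed convex hull of $\{xx^\top : x\in X\}$) without changing the problem.

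Next I would consider the D-optimal design problem
\begin{equation*}
\rho^{\star} \in \arg\max_{\rho \in \Delta_X} \log \det M(\rho), \qquad M(\rho) := \E_{z \sim \rho}\bigl[zz^{\top}\bigr].
\end{equation*}
Since $\rho \mapsto M(\rho)$ is affine and $\log\det$ is concave on the cone of positive definite matrices, the objective is concave in $\rho$. Because $X$ spans $\R^d$, one can pick a $\rho$ with $M(\rho)\succ 0$ (e.g.\ uniform over a basis drawn from $X$), so the supremum is finite and attained on the compact simplex by continuity at some $\rho^{\star}$ with $M(\rho^{\star})\succ 0$.

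The heart of the argument is a one-sided directional-derivative calculation. For any $x\in X$, perturb $\rho^{\star}$ towards the Dirac mass at $x$ via $\rho_t := (1-t)\rho^{\star} + t\,\delta_x$. Using $\left.\tfrac{d}{dt}\log\det(A + tB)\right|_{t=0} = \mathrm{tr}(A^{-1}B)$ one obtains
\begin{equation*}
\left.\frac{d}{dt}\log\det M(\rho_t)\right|_{t=0^+} \;=\; x^{\top} M(\rho^{\star})^{-1} x \;-\; \mathrm{tr}\bigl(M(\rho^{\star})^{-1} M(\rho^{\star})\bigr) \;=\; x^{\top} M(\rho^{\star})^{-1} x \;-\; d.
\end{equation*}
Concavity plus optimality of $\rho^{\star}$ forces this derivative to be $\le 0$ for every $x\in X$, which is exactly the claim. (As a sanity check, taking the $\rho$-expectation of $z^{\top} M(\rho)^{-1} z$ for any $\rho$ yields $\mathrm{tr}(M(\rho)^{-1}M(\rho))=d$, so $\max_{x\in X} x^{\top} M(\rho)^{-1} x \ge d$ always, meaning the bound is tight at $\rho^{\star}$.)

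The main obstacle is the analytic bookkeeping when $X$ is infinite: ensuring existence of a maximizer and invertibility of $M(\rho^{\star})$. For the intended use in this paper $X$ is a finite set of feature vectors $\{\phi(s,a)\}$, so compactness of $\Delta_X$ is automatic and the concave-maximization-on-a-simplex argument is routine; the only conceptual content is the first-order KKT identity above, which is where the factor $d$ enters through $\mathrm{tr}(I_d)=d$.
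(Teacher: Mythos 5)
Your proof is correct in substance, and it supplies exactly the argument that the paper omits: the paper states this lemma as a direct corollary of the Kiefer--Wolfowitz equivalence theorem and gives only a citation, whereas you reconstruct the standard proof of (the relevant direction of) that theorem. The core of your argument is sound: $\log\det M(\rho)$ is concave in $\rho$, a maximizer $\rho^{\star}$ over the simplex exists with $M(\rho^{\star})\succ 0$ when $X$ is finite and spanning, and the one-sided directional derivative toward $\delta_x$ equals $x^{\top}M(\rho^{\star})^{-1}x-d$, which optimality forces to be nonpositive. This is precisely where the factor $d$ comes from, and your sanity check that $\E_{z\sim\rho}\bigl[z^{\top}M(\rho)^{-1}z\bigr]=\mathrm{tr}(I_d)=d$ correctly shows the bound is tight. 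The one place you should be more careful is the reduction step for unbounded $X$: the sentence about the weight being ``homogeneous of degree zero in the objective's nontrivial structure'' is not a valid argument, and in fact the lemma as literally stated (``for any set $X\subseteq\R^d$'') is false for unbounded $X$ --- e.g., for $X=\R^d$ no distribution with finite second moment can satisfy $\sup_{x\in X}x^{\top}M^{-1}x\le d$. The correct hypothesis, as in Kiefer--Wolfowitz, is that $X$ is compact (or finite), which is all that is needed here since $X=\{\phi(s,a)\}_{a\in\mathcal{A}}$ is a finite set; your proof is complete in that case, and you correctly flag that this is the regime the paper uses.
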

Efficient algorithms for finding such a distribution can be found in~\citet{todd2016minimum}.

In the context of reinforcement learning, the set $X$ corresponds to the set of all features, which is inaccessible. Instead, one can only observe one state $s$ at a time, and choose $a\in\mathcal{A}$ based on the features $\{\phi(s,a)\}_{a\in\mathcal{A}}$. Such a problem is closer to the distributional optimal design problem described by~\citet{ruan2020linear}. For our purpose, the following simple approach suffices: given a state $s$, perform exploration by sampling from the G-optimal design on $\{\phi(s,a)\}_{a\in\mathcal{A}}$. The performance of this exploration strategy is guaranteed by the following lemma, which will be used in the analysis in Section~\ref{sec:upper}.
\begin{lemma}[Lemma 4 in~\citet{ruan2020linear}]
\label{lemma:goptdesign}
For any state $s$, denote the $G$-optimal design with its features by
$\rho_s(\cdot)\in \Delta_{A}$, and the corresponding covariance matrix by $\Sigma_s:=\sum_a \rho_s(a)\phi(s,a)\phi(s,a)^{\top}$. Given a distribution $\nu$ over states. Denote the average covariance matrix by $\Sigma:=\E_{s\sim \nu}\Sigma_s$. Then
$$\E_{s\sim \nu}\left[\max_{a\in\mathcal{A}}\phi(s,a)^{\top}\Sigma^{-1}\phi(s,a)\right]\le d^2.$$
\end{lemma}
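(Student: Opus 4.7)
The plan is to combine the pointwise Kiefer--Wolfowitz bound (Lemma~\ref{lemma:oldgopt}) with a standard Loewner-order trick, followed by a one-line trace computation. In one sentence: turn the scalar G-optimality inequality into a matrix inequality, sandwich by $\Sigma^{-1}$, and then swap expectation with trace.

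First I would rewrite the inner maximum using the trace. For each fixed $s$, letting $a_s \in \arg\max_{a} \phi(s,a)^\top \Sigma^{-1} \phi(s,a)$,
\begin{equation*}
\max_{a} \phi(s,a)^\top \Sigma^{-1} \phi(s,a)
= \mathrm{tr}\bigl(\Sigma^{-1}\, \phi(s,a_s)\phi(s,a_s)^\top\bigr).
\end{equation*}
So I just need to control $\phi(s,a_s)\phi(s,a_s)^\top$ in the Loewner order by something that averages nicely in $s$.

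The key step is to upgrade the G-optimality inequality $\phi(s,a)^\top \Sigma_s^{-1}\phi(s,a)\le d$ (valid for every $a$, by Lemma~\ref{lemma:oldgopt} applied to the set $\{\phi(s,a)\}_{a\in\mathcal{A}}$) to the matrix statement
\begin{equation*}
\phi(s,a)\phi(s,a)^\top \preceq d\cdot \Sigma_s\qquad \forall a,s.
\end{equation*}
This is a standard Cauchy--Schwarz computation: for any $v\in\R^d$, $(v^\top\phi(s,a))^2 = (v^\top \Sigma_s^{1/2}\cdot\Sigma_s^{-1/2}\phi(s,a))^2 \le (v^\top\Sigma_s v)\cdot (\phi(s,a)^\top\Sigma_s^{-1}\phi(s,a)) \le d\, v^\top \Sigma_s v$. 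Combining with the trace identity above gives, for each $s$,
\begin{equation*}
\max_{a}\phi(s,a)^\top \Sigma^{-1}\phi(s,a)
\le d\cdot \mathrm{tr}\bigl(\Sigma^{-1}\Sigma_s\bigr),
\end{equation*}
since $A\preceq B$ implies $\mathrm{tr}(CA)\le \mathrm{tr}(CB)$ for any PSD $C$ (take $C=\Sigma^{-1}$).

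Finally I would take expectation over $s\sim\nu$ and use linearity of trace with the definition $\Sigma=\E_{s\sim\nu}\Sigma_s$:
\begin{equation*}
\E_{s\sim\nu}\Bigl[\max_{a}\phi(s,a)^\top\Sigma^{-1}\phi(s,a)\Bigr]
\le d\cdot \mathrm{tr}\bigl(\Sigma^{-1}\,\E_{s\sim\nu}\Sigma_s\bigr)
= d\cdot \mathrm{tr}(\Sigma^{-1}\Sigma)
= d\cdot \mathrm{tr}(I_d) = d^2.
\end{equation*}
There is no serious obstacle here: the only mildly non-obvious step is the pointwise-to-Loewner upgrade in the Cauchy--Schwarz argument, and even that is a routine manipulation. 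One minor care point is that $\Sigma$ must be invertible, which can be assumed WLOG by restricting to the span of observed features (or by adding an infinitesimal ridge and taking limits); this does not affect the bound.
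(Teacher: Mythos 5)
Your proof is correct and follows essentially the same route as the paper's: the scalar Kiefer--Wolfowitz bound is upgraded to the Loewner inequality $\phi(s,a)\phi(s,a)^\top\preccurlyeq d\,\Sigma_s$, and the claim follows by the trace identity and linearity of expectation. The only difference is that you spell out the Cauchy--Schwarz argument for the Loewner upgrade, which the paper asserts without proof.
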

We provide a proof in the appendix for completeness.

Note that the performance of this strategy is only worse by a factor of $d$ (compared to the case where one can query all features), and has no dependency on the number of actions.

\subsection{The modified DMQ algorithm}

\paragraph{Overview.} During the execution of the Difference Maximization Q-learning (DMQ) algorithm, for each level $h \in [H]$, we maintain three variables: the estimated linear coefficients $\theta_h \in \mathbb{R}^d$, a set of exploratory policies $\Pi_h$, and the empirical feature covariance matrix $\Sigma_h$ associated with $\Pi_h$. We initialize $\theta_h=\mathbf{0}\in\R^d$, $\Sigma_h:=\mathbf{0}_{d\times d}$ and $\Pi_h$ to as a single purely random exploration policy, i.e., $\Pi_h = \{\pi\}$ where $\pi$ chooses an action uniformly at random for all states.\footnote{We also define a special $\Pi_0$ in the same manner.}

Each time we execute Algorithm~\ref{algo:layer}, the goal is to update the estimated linear coefficients $\theta_h \in \mathbb{R}^d$, so that for all $\pi \in \Pi_h$, $\theta_h$ is a good estimation to $\theta_h^*$ with respect to the distribution induced by $\pi$.
We run ridge regression on the data distribution induced by policies in $\Pi_h$, and the regression targets are collected by invoking the greedy policy induced by $\{\theta_{h'}\}_{h' > h}$.

However, there are two apparent issues with such an approach. First, for levels $h ' > h$, $\theta_{h'}$ is guaranteed to achieve low estimation error only with respect to the distributions induced by policies $\Pi_{h'}$. 
It is possible that for some $\pi \in \Pi_h$, the estimation error of $\theta_{h'}$ is high for the distribution induced by $\pi$ (followed by the greedy policy). 
To resolve this issue, the main idea in~\citet{du2019provably} is to explicitly check whether $\theta_{h'}$ also predicts well on the new distribution (see Line~\ref{line:check} in Algorithm~\ref{algo:layer}). 
If not, we add the new policy into $\Pi_{h'}$ and invoke Algorithm~\ref{algo:layer} recursively.
The analysis in~\citet{du2019provably} upper bounds the total number of recursive calls by a potential function argument, which also gives an upper bound on the sample complexity of the algorithm. 

Second, the exploratory policies $\Pi_h$ only induce a distribution over states at level $h$, and the algorithm still needs to decide an exploration strategy to choose actions at level $h$. To this end, the algorithm in~\citet{du2019provably} explores all actions uniformly at random, and therefore the sample complexity has at least linear dependency on the number of actions. We note that similar issues also appear in the linear contextual bandit literature~\citep{lattimore2020bandit,ruan2020linear}, and indeed our solution here is to explore by sampling from the G-optimal design over the features at a single state. 
As shown by Lemma~\ref{lemma:goptdesign}, for all possible roll-in distributions, such an exploration strategy achieves a nice coverage over the feature space, and is therefore sufficient for eliminating the dependency on the size of the action space. 

\begin{algorithm2e}[hbt]
\SetKwInput{KwInput}{Input}
\SetKwInput{KwOutput}{Output}
\SetAlgoVlined
\LinesNumbered
 \KwInput{A level $h \in \{0,\cdots,H\}$}
 \For{$\pi_h\in \Pi_h$}{
     \For{$h'=H,H-1,\cdots,h+1$}{
        Collect $N$ samples $\{(s_{h'}^j,a_{h'}^j)\}_{j\in [N]}$ with $s_{h'}^j\sim \D^{\tilde\pi_h}_{h'}$ and $a_{h'}^j\sim \rho_{s_{h'}^j}$ ($\tilde{\pi}_h$ defined in~(\ref{eq:policy}))\\
        $\hat\Sigma_{h'}\gets \frac{1}{N}\sum_{j=1}^N  \phi(s_{h'}^j,a_{h'}^j)\phi(s_{h'}^j,a_{h'}^j)^\top$\\
        \If{$\Vert \Sigma_{h'}^{-\frac{1}{2}}\hat\Sigma_{h'} \Sigma_{h'}^{-\frac{1}{2}}\Vert_2>\beta |\Pi_{h'}|$}
        {\label{line:check}

            $\Pi_{h'}\gets \Pi_{h'}\cup \{\tilde{\pi}_h\}$\\
            Run LearnLevel recursively on level $h'$\\
            Goto Line $1$ (restart the algorithm)
        }
             }
 }
 \If{$h=0$}{
 Output greedy policy with respect to $\{\theta_h\}_{h\in[H]}$\\
 }
 $\Sigma_h\gets \frac{\lambda_r}{|\Pi_h|} I$,$\quad w_h \gets \mathbf{0} \in \mathbb{R}^d$\\
 \For{$i=1,\cdots,N\left|\Pi_h\right|$}{
    Sample $\pi$ from uniform distribution over $\Pi_h$\\
    Execute $\tilde\pi_h$ (see~(\ref{eq:policy})) to collect $(s^i_h,a^i_h,y_i)$, where $y_i:=\sum_{h'\ge h}r^i_{h'}$ is the on-the-go reward\\
    $\Sigma_h\gets \Sigma_h + \frac{1}{N|\Pi_h|}\phi(s^i_h,a^i_h)\phi(s^i_h,a^i_h)^{\top}$\\
    $w_h \gets w_h + \frac{1}{N|\Pi_h|}\phi(s^i_h,a^i_h)y_i$
 }
 $\theta_h \gets \left((\lridge- \frac{\lambda_r}{|\Pi_h|})I+\Sigma_h\right)^{-1} w_h$\\
 \caption{LearnLevel}
  \label{algo:layer}
\end{algorithm2e}

\paragraph{The algorithm.}
The formal description of the algorithm is given in Algorithm~\ref{algo:layer}. The algorithm should be run by calling LearnLevel on input $h=0$. 

Here, for a policy $\pi_h\in\Pi_h$, the associated exploratory policy $\Tilde{\pi}_h$ is defined as 
\begin{equation}
\label{eq:policy}
     \Tilde{\pi}_h(s_{h'})=\begin{cases}
    \pi(s_{h'}) & (\text{if } h'<h)\\
    \text{Sample from }\rho_{s_h}(\cdot) & (\text{if }h'=h)\\
    \arg\max_{a}\phi_{h'}(s_{h'},a)^{\top}\theta_{h'} & (\text{if } h'>h)\\
    \end{cases}.
\end{equation}
Here $\rho_s(\cdot)$ is the G-optimal design on the set of vectors $\{\phi(s,\cdot)\}_{a\in\mathcal{A}}$, as defined by Lemma~\ref{lemma:oldgopt}. Note that when $h=0$, $\tilde\pi_h$ is always the greedy policy on $\{\theta_{h}\}_{h\in[H]}$. The choice of the algorithmic parameters ($\beta$, $\lambda_r$, $\lridge$) can be found in the proof of Theorem~\ref{thm:dsec}.

\subsection{Analysis}
We show the following theorem regarding the modified algorithm.
\begin{theorem}
\label{thm:dsec}
Assume that Assumption~\ref{assumption:realizability},~\ref{assumption:gap} and one of Assumption~\ref{assumption:var} and~\ref{assumption:hypercontractive} hold. Also assume that
\begin{align*}
    \epsilon&\le \poly(\Delta_{\min},1/\cvar,1/d,1/H) \tag*{(Under Assumption~\ref{assumption:var})}\\
    {\rm or} \quad \epsilon&\le \poly(\Delta_{\min},1/\chyper,1/d,1/H) \tag*{(Under Assumption~\ref{assumption:hypercontractive})}.
\end{align*}
Let $\mu$ be the initial state distribution. Then with probability $1-\epsilon$, running Algorithm~\ref{algo:layer} on input $0$ returns a policy $\pi$ which satisfies $\E_{s_1\sim\mu}V^{\pi}(s_1)\ge \E_{s_1\sim\mu}V^*(s_1)-\epsilon$ using $\poly(1/\epsilon)$ trajectories.
\end{theorem}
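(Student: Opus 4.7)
The plan is to prove Theorem~\ref{thm:dsec} by backward induction on the level $h$, maintaining the invariant that after LearnLevel has been called on $h$, the estimator $\theta_h$ satisfies $\E_{(s,a)\sim \D^\pi_h}\bigl[(\phi(s,a)^\top \theta_h - Q^*_h(s,a))^2\bigr]\le \epsilon_0$ for every $\pi\in\Pi_h$, with $\epsilon_0$ polynomially small in $\Delta_{\min}/(Hd)$. Given such a guarantee at every level, a standard one-step performance-difference argument together with Assumption~\ref{assumption:gap} (so that a prediction error below $\Delta_{\min}/2$ forces the greedy action to coincide with $\pi^*$) converts the regression bounds into the claimed $\epsilon$-suboptimality of the final greedy policy $\pi$. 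The action-side exploration is handled entirely by Lemma~\ref{lemma:goptdesign}: sampling $a_h\sim \rho_{s_h}$ ensures that $\E_{s\sim\nu}[\max_a \phi(s,a)^\top \Sigma^{-1}\phi(s,a)]\le d^2$, which lets me convert in-distribution regression accuracy into uniform-over-actions accuracy with only a $d^2$ blow-up and no dependence on $|\mathcal{A}|$.

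\textbf{Coverage check and the recursion bound.} For the inductive step to close, the on-the-go targets $y_i$ collected at level $h$ by first executing $\pi\in\Pi_h$ and then the greedy continuation $\tilde\pi_h$ must be well-predicted by the downstream estimators. This requires that, at every future level $h'>h$, the roll-in distribution induced by $\tilde\pi_h$ is covered by $\Pi_{h'}$. The check on Line~\ref{line:check}, $\Vert \Sigma_{h'}^{-1/2}\hat\Sigma_{h'}\Sigma_{h'}^{-1/2}\Vert_2\le \beta |\Pi_{h'}|$, is exactly the feature-covariance coverage test; when it fails I add $\tilde\pi_h$ to $\Pi_{h'}$ and restart. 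To bound the total number of such restarts I would argue via the potential $\log\det\Sigma_{h'}$: each added policy that triggered the check increases the potential by at least a constant (by the matrix determinant lemma applied to the direction in which $\hat\Sigma_{h'}$ escaped), while $\log\det\Sigma_{h'}$ is a priori bounded by $d\log\mathrm{poly}(d,H,1/\epsilon)$. This yields $\poly(d,H)$ total calls to LearnLevel and hence a $\poly(1/\epsilon)$ overall sample complexity.

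\textbf{Ridge regression with sparse bias.} The conceptual heart of the proof is that each label $y_i$ is an unbiased estimate of $Q^{\tilde\pi_h}_h(s^i_h,a^i_h)$ rather than of $Q^*_h(s^i_h,a^i_h)$. The bias $b(s,a):=Q^{\tilde\pi_h}_h(s,a)-Q^*_h(s,a)$ is supported only on trajectories where the greedy continuation $\tilde\pi_h$ chooses a suboptimal action at some future level; but by the downstream inductive bound on $\theta_{h'}$ plus Assumption~\ref{assumption:gap}, the probability (under the roll-in distribution) that such a deviation ever happens is $O(\epsilon_0/\Delta_{\min}^2)$. I therefore reduce to linear ridge regression with a sparsely supported bias. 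Under Assumption~\ref{assumption:var} the second moment of $b$ is bounded by a constant times the squared first moment, so the bias can be absorbed into an $O(\cvar)$ factor; under Assumption~\ref{assumption:hypercontractive} I instead apply Cauchy--Schwarz in the normal equations $\Sigma_h\theta_h=\E[\phi y]$ and bound $\E[(\phi^\top v)^2\, 1_{b\ne 0}]^2 \le \E[(\phi^\top v)^4]\cdot \Pr[b\ne 0]$, using $(\chyper,4)$-hypercontractivity to convert the fourth moment back to the squared second moment and absorbing into the regularized covariance. Combining either route with standard matrix Bernstein and a ridge choice $\lridge$ of the appropriate polynomial order yields the required per-policy prediction bound $\epsilon_0$.

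\textbf{Main obstacle.} The delicate point is the coupled induction: the bias at level $h$ depends on the downstream $\{\theta_{h'}\}$'s accuracy on the distribution induced by $\tilde\pi_h$, which is \emph{not} a priori one of the policies in $\Pi_{h'}$ — this is exactly what the check step is designed to fix, but it must be threaded through the argument so that the sparse-bias regression bound and the coverage guarantee are compatible. Tracking the error amplification across $H$ levels, ensuring the accumulated $\epsilon_0$ stays below the threshold $\Delta_{\min}$ needed for the gap argument, and choosing $\beta$, $\lambda_r$, $\lridge$ so that the potential argument, the $G$-optimal design bound, and the hypercontractivity/low-variance regression bound all fit together with only polynomial dependence on $d$ and $H$, is where the bulk of the technical work will lie.
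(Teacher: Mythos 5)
Your proposal is correct and follows essentially the same route as the paper's proof: the same mutually-inductive pair of claims (in-distribution accuracy of $\theta_h$ over $\Pi_h$ at update time, and sup-over-actions accuracy on the roll-in distributions of $\tilde\pi_h$ when the coverage check passes), the same $\log\det$ potential argument bounding $|\Pi_h|$, the same use of Lemma~\ref{lemma:goptdesign} to pay only $d^2$ for the action side, and the same reduction to ridge regression with a bias that is either low-variance (Assumption~\ref{assumption:var}) or sparsely supported (handled via hypercontractivity). The only divergence is cosmetic: for the hypercontractive case you bound the bias via Cauchy--Schwarz in the normal equations, whereas the paper's Lemma~\ref{lemma:ridgehyper} controls $\sum_i \Vert \Sigma^{-1/2}x_i\Vert_2\,\mathbbm{1}[b_i\neq 0]$ through an order-statistics tail bound; both exploit the same fourth-to-second-moment conversion.
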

Note that here both the algorithm and the theorem have no dependence on $A$, the number of actions. The proof of the theorem under Assumption~\ref{assumption:var} is largely based on the analysis in~\cite{du2019provably}. The largest difference is that we used Lemma~\ref{lemma:goptdesign} instead of the original union bound argument when controlling $\Pr\left[\sup_{a}|\theta_h^\top\phi(s,a) Q_h^*(s,a)|>\frac{\gamma}{2}\right]$. The proof under Assumption~\ref{assumption:hypercontractive} relies on a novel analysis of least squares regression under hypercontractivity. The full proof is deferred to Appendix~\ref{sec:dsecproof}.

\section{Discussion}
\label{sec:discussion}

\paragraph{Exponential separation between the generative model and the online setting.} When a generative model (also known as simulator) is available, Assumption~\ref{assumption:realizability} and Assumption~\ref{assumption:gap} are sufficient for designing an algorithm with $\poly(1/\epsilon,1/\Delta_{\min},d,H)$ sample complexity~\citep[Theorem C.1]{du2019good}. As shown by Theorem~\ref{thm:lower}, under the standard online RL setting (i.e. without access to a generative model), the sample complexity is lower bounded by $2^{\Omega(\min\{d,H\})}$ when $\Delta_{\min}=\Theta(1)$, under the same set of assumptions. This implies that the generative model is \emph{exponentially} more powerful than the standard online RL setting. 

Although the generative model is conceptually much stronger than the online RL model, previously little is known on the extent to which the former is more powerful. In tabular RL, for instance, the known sample complexity bounds with or without access to generative models are nearly the same~\citep{zhang2020almost,agarwal2020model}, and both match the lower bound (up to logarithmic factors). 
To the best of our knowledge, the only existing example of such separation is shown by~\citet{wang2020reward} under the following set of conditions: (i) deterministic system; (ii) realizability (Assumption~\ref{assumption:realizability}); (iii) no reward feedback (a.k.a. reward-free exploration). In comparison, our separation result holds under less restrictions (allows stochasticity) and for the usual RL environment (instead of reward-free exploration), and it therefore far more natural. 

\paragraph{Connecting Theorem~\ref{thm:lower} and Theorem~\ref{thm:dsec}.} 
Our hardness result in Theorem~\ref{thm:lower} shows that under Assumption~\ref{assumption:realizability} and Assumption~\ref{assumption:gap}, any algorithm requires exponential number of samples to find a near-optimal policy, and therefore, sample-efficient RL is impossible without further assumptions (e.g., Assumption~\ref{assumption:var} or Assumption~\ref{assumption:hypercontractive} assumed in Theorem~\ref{thm:dsec}). 
Indeed, Theorem~\ref{thm:lower} and Theorem~\ref{thm:dsec} imply that the coefficient $C$ in Assumption~\ref{assumption:var} and  $\chyper$ in Assumption~\ref{assumption:hypercontractive} is at least exponential  for the hard MDP family used in Theorem~\ref{thm:lower}.
In fact, the subtlety here is that $C$ in Assumption~\ref{assumption:var} and $\chyper$ in Assumption~\ref{assumption:hypercontractive} need to be upper bounded for all policies $\pi$.
It can be easily verified that for the hard MDP family used in Theorem~\ref{thm:lower}, for some policy $\pi$, $C$ in Assumption~\ref{assumption:var} and  $\chyper$ in Assumption~\ref{assumption:hypercontractive} is exponential. 

\paragraph{Minimum reaching probability.} We would also like to mention the following reachability condition, which is assumed by~\citet{du2019decode,misra2020kinematic}. Denote the probability under policy $\pi$ of visiting state $s$ at step $h$ by $\Pr^{\pi}_h[s]$; then it is assumed that $\min_{s\in\mathcal{S},h>1}\max_{\pi}\Pr^{\pi}_h[s]=\eta_{\min}>0$. Although in $\mathcal{M}_{a^*}$, $\eta_{\min}\le 2^{-\Theta(H)}$ is exponentially small, one can slightly modify the construction so that $\eta_{\min}=1$ (see Appendix~\ref{sec:minreach} for details). The rationale is straightforward: in the $\mathcal{M}_{a^*}$, the action $a^*$ will not be taken with high probability in polynomial samples. 
Thus, one can exploit this by setting $a^*$ to lead to a special state from which all states are reachable.
This suggests that even under Assumption~\ref{assumption:realizability}, Assumption~\ref{assumption:gap} and the condition that $\eta_{\min}=1$, there is an exponential sample complexity lower bound.

\paragraph{Open problems.}
The first open problem, perhaps an obvious one, is whether a sample complexity lower bound under Assumption~\ref{assumption:realizability} can be shown with polynomial number of actions. This will further rule out $\poly(A,d,H)$-style upper bounds, which are still possible with the current results. An even stronger lower bound would be one with polynomial number of actions under both realizability and minimum suboptimality gap. Such lower bound will likely settle the sample complexity of reinforcement learning with linear function approximation under the realizability assumption. 
Another open problem is whether Assumption~\ref{assumption:var} and~\ref{assumption:hypercontractive} can be replaced by or understood as more natural characterizations of the complexity of the MDP.



\section*{Acknowledgments}
The authors would like to thank Kefan Dong and Dean Foster for helpful discussions.
Sham M. Kakade acknowledges funding from the ONR award N00014-18-1-2247. 
RW  was supported in part by the NSF IIS1763562, US Army W911NF1920104, and ONR Grant N000141812861.

\bibliographystyle{plainnat}
\bibliography{ref}

\begin{thebibliography}{45}
\providecommand{\natexlab}[1]{#1}
\providecommand{\url}[1]{\texttt{#1}}
\expandafter\ifx\csname urlstyle\endcsname\relax
  \providecommand{\doi}[1]{doi: #1}\else
  \providecommand{\doi}{doi: \begingroup \urlstyle{rm}\Url}\fi

\bibitem[Abbasi-Yadkori et~al.(2011)Abbasi-Yadkori, P{\'a}l, and
  Szepesv{\'a}ri]{abbasi2011improved}
Yasin Abbasi-Yadkori, D{\'a}vid P{\'a}l, and Csaba Szepesv{\'a}ri.
\newblock Improved algorithms for linear stochastic bandits.
\newblock In \emph{Advances in Neural Information Processing Systems}, pages
  2312--2320, 2011.

\bibitem[Agarwal et~al.(2020)Agarwal, Kakade, and Yang]{agarwal2020model}
Alekh Agarwal, Sham Kakade, and Lin~F Yang.
\newblock Model-based reinforcement learning with a generative model is minimax
  optimal.
\newblock In \emph{Conference on Learning Theory}, pages 67--83. PMLR, 2020.

\bibitem[Audibert and Bubeck(2010)]{audibert2010best}
Jean-Yves Audibert and S{\'e}bastien Bubeck.
\newblock Best arm identification in multi-armed bandits.
\newblock In \emph{COLT-23th Conference on learning theory-2010}, pages 13--p,
  2010.

\bibitem[Ayoub et~al.(2020)Ayoub, Jia, Szepesvari, Wang, and
  Yang]{ayoub2020model}
Alex Ayoub, Zeyu Jia, Csaba Szepesvari, Mengdi Wang, and Lin Yang.
\newblock Model-based reinforcement learning with value-targeted regression.
\newblock In \emph{International Conference on Machine Learning}, pages
  463--474. PMLR, 2020.

\bibitem[Bakshi and Prasad(2020)]{bakshi2020robust}
Ainesh Bakshi and Adarsh Prasad.
\newblock Robust linear regression: Optimal rates in polynomial time.
\newblock \emph{arXiv preprint arXiv:2007.01394}, 2020.

\bibitem[Cai et~al.(2020)Cai, Yang, Jin, and Wang]{cai2020provably}
Qi~Cai, Zhuoran Yang, Chi Jin, and Zhaoran Wang.
\newblock Provably efficient exploration in policy optimization.
\newblock In \emph{International Conference on Machine Learning}, pages
  1283--1294. PMLR, 2020.

\bibitem[Chen and Jiang(2019)]{chen2019information}
Jinglin Chen and Nan Jiang.
\newblock Information-theoretic considerations in batch reinforcement learning.
\newblock In \emph{International Conference on Machine Learning}, pages
  1042--1051. PMLR, 2019.

\bibitem[Dani et~al.(2008)Dani, Hayes, and Kakade]{dani2008stochastic}
Varsha Dani, Thomas~P Hayes, and Sham~M Kakade.
\newblock Stochastic linear optimization under bandit feedback.
\newblock In \emph{Conference on Learning Theory}, 2008.

\bibitem[Du et~al.(2019{\natexlab{a}})Du, Krishnamurthy, Jiang, Agarwal, Dudik,
  and Langford]{du2019decode}
Simon Du, Akshay Krishnamurthy, Nan Jiang, Alekh Agarwal, Miroslav Dudik, and
  John Langford.
\newblock Provably efficient rl with rich observations via latent state
  decoding.
\newblock In \emph{International Conference on Machine Learning}, pages
  1665--1674. PMLR, 2019{\natexlab{a}}.

\bibitem[Du et~al.(2019{\natexlab{b}})Du, Kakade, Wang, and Yang]{du2019good}
Simon~S Du, Sham~M Kakade, Ruosong Wang, and Lin~F Yang.
\newblock Is a good representation sufficient for sample efficient
  reinforcement learning?
\newblock In \emph{International Conference on Learning Representations},
  2019{\natexlab{b}}.

\bibitem[Du et~al.(2019{\natexlab{c}})Du, Luo, Wang, and Zhang]{du2019provably}
Simon~S Du, Yuping Luo, Ruosong Wang, and Hanrui Zhang.
\newblock Provably efficient q-learning with function approximation via
  distribution shift error checking oracle.
\newblock In \emph{Advances in Neural Information Processing Systems}, pages
  8060--8070, 2019{\natexlab{c}}.

\bibitem[Du et~al.(2020)Du, Lee, Mahajan, and Wang]{du2020agnostic}
Simon~S Du, Jason~D Lee, Gaurav Mahajan, and Ruosong Wang.
\newblock Agnostic $ q $-learning with function approximation in deterministic
  systems: Near-optimal bounds on approximation error and sample complexity.
\newblock \emph{Advances in Neural Information Processing Systems}, 33, 2020.

\bibitem[Foster and Rakhlin(2020)]{foster2020beyond}
Dylan Foster and Alexander Rakhlin.
\newblock Beyond ucb: Optimal and efficient contextual bandits with regression
  oracles.
\newblock In \emph{International Conference on Machine Learning}, pages
  3199--3210. PMLR, 2020.

\bibitem[Jia et~al.(2020)Jia, Yang, Szepesvari, and Wang]{jia2020model}
Zeyu Jia, Lin Yang, Csaba Szepesvari, and Mengdi Wang.
\newblock Model-based reinforcement learning with value-targeted regression.
\newblock In \emph{Learning for Dynamics and Control}, pages 666--686. PMLR,
  2020.

\bibitem[Jin et~al.(2020)Jin, Yang, Wang, and Jordan]{jin2020provably}
Chi Jin, Zhuoran Yang, Zhaoran Wang, and Michael~I Jordan.
\newblock Provably efficient reinforcement learning with linear function
  approximation.
\newblock In \emph{Conference on Learning Theory}, pages 2137--2143. PMLR,
  2020.

\bibitem[Johnson and Lindenstrauss(1984)]{johnson1984extensions}
William~B Johnson and Joram Lindenstrauss.
\newblock Extensions of lipschitz mappings into a hilbert space.
\newblock \emph{Contemporary mathematics}, 26\penalty0 (189-206):\penalty0 1,
  1984.

\bibitem[Kiefer and Wolfowitz(1960)]{kiefer1960equivalence}
Jack Kiefer and Jacob Wolfowitz.
\newblock The equivalence of two extremum problems.
\newblock \emph{Canadian Journal of Mathematics}, 12:\penalty0 363--366, 1960.

\bibitem[Klivans et~al.(2018)Klivans, Kothari, and Meka]{klivans2018efficient}
Adam Klivans, Pravesh~K Kothari, and Raghu Meka.
\newblock Efficient algorithms for outlier-robust regression.
\newblock In \emph{Conference On Learning Theory}, pages 1420--1430. PMLR,
  2018.

\bibitem[Kothari and Steinhardt(2017)]{kothari2017better}
Pravesh~K Kothari and Jacob Steinhardt.
\newblock Better agnostic clustering via relaxed tensor norms.
\newblock \emph{arXiv preprint arXiv:1711.07465}, 2017.

\bibitem[Kothari and Steurer(2017)]{kothari2017outlier}
Pravesh~K Kothari and David Steurer.
\newblock Outlier-robust moment-estimation via sum-of-squares.
\newblock \emph{arXiv preprint arXiv:1711.11581}, 2017.

\bibitem[Krishnamurthy et~al.(2016)Krishnamurthy, Agarwal, and
  Langford]{krishnamurthy2016pac}
Akshay Krishnamurthy, Alekh Agarwal, and John Langford.
\newblock Pac reinforcement learning with rich observations.
\newblock In \emph{Proceedings of the 30th International Conference on Neural
  Information Processing Systems}, pages 1848--1856, 2016.

\bibitem[Kumar et~al.(2020)Kumar, Gupta, and Levine]{kumar2020discor}
Aviral Kumar, Abhishek Gupta, and Sergey Levine.
\newblock Discor: Corrective feedback in reinforcement learning via
  distribution correction.
\newblock \emph{arXiv preprint arXiv:2003.07305}, 2020.

\bibitem[Lattimore and Szepesv{\'a}ri(2020)]{lattimore2020bandit}
Tor Lattimore and Csaba Szepesv{\'a}ri.
\newblock \emph{Bandit algorithms}.
\newblock Cambridge University Press, 2020.

\bibitem[Lattimore et~al.(2020)Lattimore, Szepesvari, and
  Weisz]{lattimore2020learning}
Tor Lattimore, Csaba Szepesvari, and Gellert Weisz.
\newblock Learning with good feature representations in bandits and in rl with
  a generative model.
\newblock In \emph{International Conference on Machine Learning}, pages
  5662--5670. PMLR, 2020.

\bibitem[Misra et~al.(2020)Misra, Henaff, Krishnamurthy, and
  Langford]{misra2020kinematic}
Dipendra Misra, Mikael Henaff, Akshay Krishnamurthy, and John Langford.
\newblock Kinematic state abstraction and provably efficient rich-observation
  reinforcement learning.
\newblock In \emph{International conference on machine learning}, pages
  6961--6971. PMLR, 2020.

\bibitem[Modi et~al.(2020)Modi, Jiang, Tewari, and Singh]{modi2020sample}
Aditya Modi, Nan Jiang, Ambuj Tewari, and Satinder Singh.
\newblock Sample complexity of reinforcement learning using linearly combined
  model ensembles.
\newblock In \emph{International Conference on Artificial Intelligence and
  Statistics}, pages 2010--2020. PMLR, 2020.

\bibitem[Ruan et~al.(2020)Ruan, Yang, and Zhou]{ruan2020linear}
Yufei Ruan, Jiaqi Yang, and Yuan Zhou.
\newblock Linear bandits with limited adaptivity and learning distributional
  optimal design.
\newblock \emph{arXiv preprint arXiv:2007.01980}, 2020.

\bibitem[Shariff and Szepesv{\'a}ri(2020)]{shariff2020efficient}
Roshan Shariff and Csaba Szepesv{\'a}ri.
\newblock Efficient planning in large mdps with weak linear function
  approximation.
\newblock \emph{arXiv preprint arXiv:2007.06184}, 2020.

\bibitem[Simchowitz and Jamieson(2019)]{simchowitz2019non}
Max Simchowitz and Kevin Jamieson.
\newblock Non-asymptotic gap-dependent regret bounds for tabular mdps.
\newblock \emph{arXiv preprint arXiv:1905.03814}, 2019.

\bibitem[Todd(2016)]{todd2016minimum}
Michael~J Todd.
\newblock \emph{Minimum-Volume Ellipsoids: Theory and Algorithms}, volume~23.
\newblock SIAM, 2016.

\bibitem[Tropp(2015)]{tropp2015introduction}
Joel~A Tropp.
\newblock An introduction to matrix concentration inequalities.
\newblock \emph{Foundations and Trends in Machine Learning}, 8\penalty0
  (1-2):\penalty0 1--230, 2015.

\bibitem[Van~Roy and Dong(2019)]{van2019comments}
Benjamin Van~Roy and Shi Dong.
\newblock Comments on the du-kakade-wang-yang lower bounds.
\newblock \emph{arXiv preprint arXiv:1911.07910}, 2019.

\bibitem[Wang et~al.(2020{\natexlab{a}})Wang, Du, Yang, and
  Salakhutdinov]{wang2020reward}
Ruosong Wang, Simon~S Du, Lin~F Yang, and Ruslan Salakhutdinov.
\newblock On reward-free reinforcement learning with linear function
  approximation.
\newblock \emph{arXiv preprint arXiv:2006.11274}, 2020{\natexlab{a}}.

\bibitem[Wang et~al.(2020{\natexlab{b}})Wang, Foster, and
  Kakade]{wang2020statistical}
Ruosong Wang, Dean~P Foster, and Sham~M Kakade.
\newblock What are the statistical limits of offline rl with linear function
  approximation?
\newblock \emph{arXiv preprint arXiv:2010.11895}, 2020{\natexlab{b}}.

\bibitem[Wang et~al.(2019)Wang, Wang, Du, and Krishnamurthy]{wang2019optimism}
Yining Wang, Ruosong Wang, Simon~S Du, and Akshay Krishnamurthy.
\newblock Optimism in reinforcement learning with generalized linear function
  approximation.
\newblock \emph{arXiv preprint arXiv:1912.04136}, 2019.

\bibitem[Weisz et~al.(2020)Weisz, Amortila, and
  Szepesv{\'a}ri]{weisz2020exponential}
Gellert Weisz, Philip Amortila, and Csaba Szepesv{\'a}ri.
\newblock Exponential lower bounds for planning in mdps with
  linearly-realizable optimal action-value functions.
\newblock \emph{arXiv preprint arXiv:2010.01374}, 2020.

\bibitem[Wen and Van~Roy(2017)]{wen2017efficient}
Zheng Wen and Benjamin Van~Roy.
\newblock Efficient reinforcement learning in deterministic systems with value
  function generalization.
\newblock \emph{Mathematics of Operations Research}, 42\penalty0 (3):\penalty0
  762--782, 2017.

\bibitem[Yang et~al.(2020)Yang, Yang, and Du]{yang2020q}
Kunhe Yang, Lin~F Yang, and Simon~S Du.
\newblock $ q $-learning with logarithmic regret.
\newblock \emph{arXiv preprint arXiv:2006.09118}, 2020.

\bibitem[Yang and Wang(2019)]{yang2019sample}
Lin Yang and Mengdi Wang.
\newblock Sample-optimal parametric q-learning using linearly additive
  features.
\newblock In \emph{International Conference on Machine Learning}, pages
  6995--7004. PMLR, 2019.

\bibitem[Yang and Wang(2020)]{yang2020reinforcement}
Lin Yang and Mengdi Wang.
\newblock Reinforcement learning in feature space: Matrix bandit, kernels, and
  regret bound.
\newblock In \emph{International Conference on Machine Learning}, pages
  10746--10756. PMLR, 2020.

\bibitem[Zanette(2020)]{zanette2020exponential}
Andrea Zanette.
\newblock Exponential lower bounds for batch reinforcement learning: Batch rl
  can be exponentially harder than online rl.
\newblock \emph{arXiv preprint arXiv:2012.08005}, 2020.

\bibitem[Zanette et~al.(2020)Zanette, Lazaric, Kochenderfer, and
  Brunskill]{zanette2020learning}
Andrea Zanette, Alessandro Lazaric, Mykel Kochenderfer, and Emma Brunskill.
\newblock Learning near optimal policies with low inherent bellman error.
\newblock In \emph{International Conference on Machine Learning}, pages
  10978--10989. PMLR, 2020.

\bibitem[Zhang et~al.(2020)Zhang, Zhou, and Ji]{zhang2020almost}
Zihan Zhang, Yuan Zhou, and Xiangyang Ji.
\newblock Almost optimal model-free reinforcement learningvia
  reference-advantage decomposition.
\newblock \emph{Advances in Neural Information Processing Systems}, 33, 2020.

\bibitem[Zhou et~al.(2020{\natexlab{a}})Zhou, Gu, and
  Szepesvari]{zhou2020nearly}
Dongruo Zhou, Quanquan Gu, and Csaba Szepesvari.
\newblock Nearly minimax optimal reinforcement learning for linear mixture
  markov decision processes.
\newblock \emph{arXiv preprint arXiv:2012.08507}, 2020{\natexlab{a}}.

\bibitem[Zhou et~al.(2020{\natexlab{b}})Zhou, He, and Gu]{zhou2020provably}
Dongruo Zhou, Jiafan He, and Quanquan Gu.
\newblock Provably efficient reinforcement learning for discounted mdps with
  feature mapping.
\newblock \emph{arXiv preprint arXiv:2006.13165}, 2020{\natexlab{b}}.

\end{thebibliography}

\newpage
\appendix

\section{Proof of Lemma~\ref{lemma:goptdesign}}
\begin{proof}
We state a proof of this lemma for completeness. By Lemma~\ref{lemma:oldgopt}, $\forall s$, 
\begin{equation*}
   \max_{a\in\mathcal{A}}\phi(s,a)^{\top}\Sigma_s^{-1}\phi(s,a)\le d. 
\end{equation*}
It follows that $\forall a\in\mathcal{A}$,
\begin{equation*}
\phi(s,a)\phi(s,a)^\top\preccurlyeq d\Sigma_s. 
\end{equation*}
Therefore,
\begin{align*}
\E_{s\sim \nu}\left[\max_{a\in\mathcal{A}}\phi(s,a)^{\top}\Sigma^{-1}\phi(s,a)\right]&=\E_{s\sim\nu}\max_{a\in\mathcal{A}}{\rm Tr}\left(\phi(s,a)\phi(s,a)^\top \Sigma^{-1}\right)\\
&\le \E_{s\sim\nu}{\rm Tr}\left(d\Sigma_s\Sigma^{-1}\right)=d^2.
\end{align*}
\end{proof}

\section{Proof of Theorem~\ref{thm:lower}}
\label{sec:lowerproof}
\begin{proof}
We consider $K$ episodes of interaction between the algorithm and the MDP $\mathcal{M}_{a}$. Since each trajectory is a sequence of $H$ states, we define the total number of samples as $KH$. Denote the state, the action and the reward at episode $k$ and timestep $h$ by $s^k_h$, $a^k_h$ and $r^k_h$ respectively.

Consider the following reference MDP denoted by $\mathcal{M}_0$. The state space, action space, and features of this MDP are the same as those of the MDP family. The transitions are defined as follows:
    \begin{align*}
        \Pr[\cdot|\overline{a_1},a_2] &= \begin{cases}
        \overline{a_2}:\Big\langle v(a_1), v(a_2)\Big\rangle+2\gamma \\
        f:1-\Big\langle v(a_1), v(a_2)\Big\rangle-2\gamma
        \end{cases},\tag*{($\forall a_1, a_2$ s.t. $a_1\neq a_2$)}\\
    \Pr[f|f,\cdot] &= 1.
    \end{align*}
The rewards are defined as follows:
\begin{align*}
     R_h(\overline{a_1},a_2) &:= -2\gamma\left[\Big\langle v(a_1),v(a_2)\Big\rangle+2\gamma\right], \tag*{( $\forall a_1, a_2$ s.t. $a_1\neq a_2$)}\\
        R_h(f,\cdot) &:= 0.
\end{align*}
Intuitively, this MDP is very similar to the MDP family, except that the optimal action $a^*$ is removed. More specifically, $\mathcal{M}_0$ is identical to $\mathcal{M}_a$ except when the action $a$ is taken at a non-terminal state, or when an episode ends at a non-terminal state.

More specifically, we claim that for $t<H$, $\forall s_t,a_t$ such that $a_t\neq a$,
\begin{equation*}
    \pr_{\mathcal{M}_a}[s_{t+1}|s_t,a_t] =  \pr_{\mathcal{M}_0}[s_{t+1}|s_t,a_t],
\end{equation*}
and that for $t<H$, $\forall s_t,a_t$ such that $a_t\neq a$, 
\begin{equation*}
    r_{t}^{\mathcal{M}_a}(s_t,a_t) = r_{t}^{\mathcal{M}_0}(s_t,a_t).
\end{equation*}
Also, $r_{H}^{\mathcal{M}_a}(s_t,a_t) = r_{H}^{\mathcal{M}_0}(s_t,a_t)$ if $s_t=f$. It follows that
\begin{align*}
&\pr_{\mathcal{M}_a}\left[s^1_1,a^1_1,r^1_1,\cdots s^k_h, a^k_h, r^k_h\left|a\notin A^k_h, \forall k'\le k, s^{k'}_H=f\right.\right]\\
=&\pr_{\mathcal{M}_0}\left[s^1_1,a^1_1,r^1_1,\cdots s^k_h, a^k_h, r^k_h\left|a\notin A^k_h, \forall k'\le k, s^{k'}_H=f\right.\right].
\end{align*}
Here $A^k_h$ is a shorthand for $\left\{a^1_1,a^1_2,\cdots,a^1_H,\cdots, a^k_h\right\}$, i.e. all actions taken up to timestep $h$ for episode $k$. By marginalizing the states and the actions, we get
\begin{align*}
\pr_{\mathcal{M}_a}\left[a^k_h\left|a\notin A^k_h, \forall k'\le k, s^{k'}_H=f\right.\right]=\pr_{\mathcal{M}_0}\left[a^k_h\left|a\notin A^k_h, \forall k'\le k, s^{k'}_H=f\right.\right].
\end{align*}
It then follows that
\begin{align*}
\pr_{\mathcal{M}_a}\left[a^k_h=a\left|a\notin A^k_h, \forall k'\le k, s^{k'}_H=f\right.\right]=\pr_{\mathcal{M}_0}\left[a^k_h=a\left|a\notin A^k_h, \forall k'\le k, s^{k'}_H=f\right.\right].
\end{align*}
Next, we prove via induction that
\begin{equation}
\label{eq:probinduction}
        \pr_{\mathcal{M}_a}\left[a\in A^k_h\left|\forall k'\le k, s^{k'}_H=f\right.\right]=\pr_{\mathcal{M}_0}\left[a\in A^k_h\left|\forall k'\le k, s^{k'}_H=f\right.\right].
\end{equation}
Suppose that (\ref{eq:probinduction}) holds up to $(k,h-1)$. Then
\begin{align*}
&\pr_{\mathcal{M}_a}\left[a\in A^k_h\left|\forall k'\le k, s^{k'}_H=f\right.\right]\\
=&  \pr_{\mathcal{M}_a}\left[a\notin A^k_{h-1}\right]\pr_{\mathcal{M}_a}\left[a^k_h=a\left|a\notin A^k_{h-1},\forall k'\le k, s^{k'}_H=f\right.\right]+\pr_{\mathcal{M}_a}\left[a\in A^k_{h-1}\left|\forall k'\le k, s^{k'}_H=f\right.\right]\\
=& \pr_{\mathcal{M}_0}\left[a\notin A^k_{h-1}\right]\pr_{\mathcal{M}_0}\left[a^k_h=a\left|a\notin A^k_{h-1},\forall k'\le k, s^{k'}_H=f\right.\right]+\pr_{\mathcal{M}_0}\left[a\in A^k_{h-1}\left|\forall k'\le k, s^{k'}_H=f\right.\right]\\
=& \pr_{\mathcal{M}_0}\left[a\in A^k_h\left|\forall k'\le k, s^{k'}_H=f\right.\right].
\end{align*}
That is, (\ref{eq:probinduction}) holds for $h$, $k$ as well. By induction, (\ref{eq:probinduction}) holds for all $h$, $k$. Thus,
\begin{align*}
\pr_{\mathcal{M}_a}\left[a\in A^k_h\right]&\le \pr_{\mathcal{M}_a}\left[a\in A^k_h\left|\forall k'\le k, s^{k'}_H= f\right.\right] + \Pr\left[\exists k'\le k, s^{k'}_H\neq f\right]\\
&\le \pr_{\mathcal{M}_0}\left[a\in A^k_h\left|\forall k'\le k, s^{k'}_H= f\right.\right] + k\cdot\left(\frac{3}{4}\right)^H.
\end{align*}
Since $|A^k_h|\le kH$, $\sum_{a\in [m]}\pr_{\mathcal{M}_0}\left[a\in A^k_h\left|\forall k'\le k, s^{k'}_H= f\right.\right]\le kH$. It follows that there exists $a^*\in [m]$ such that
$$\pr_{\mathcal{M}_0}\left[a^*\in A^K_H\left|\forall k'\le K, s^{k'}_H= f\right.\right]\le \frac{KH}{m}.$$
As a result
\begin{align*}
\pr_{\mathcal{M}_{a^*}}\left[a^*\in A^K_H\right]\le  \frac{KH}{m}+K\left(\frac{3}{4}\right)^H.
\end{align*}
Recall that $m=\lfloor \exp(\frac{1}{8}\gamma^2d)\rfloor$ and $\gamma = \frac16$. 
Therefore, unless $KH=2^{\Omega\left(\min\{d,H\}\right)}$, the probability of taking the optimal action $a^*$ in the interaction with $\mathcal{M}_{a^*}$ is $o(1)$. 

From the suboptimality gap condition, it follows that if $\E_{s_1\sim\mu}V^{\pi}(s_1)\ge \E_{s_1\sim\mu}V^*(s_1)-0.05$, $\Pr\left[a_1\neq a^*\land s_1\neq \overline{a^*}\right]\cdot \Delta_{\min}\le 0.05$. Hence
$$\Pr\left[a_1=a^*\right]\ge 1-\left(0.8+\frac{1}{m}\right)=0.2-\frac{1}{m}.$$
Therefore, if the algorithm is able to output such a policy with probability $0.1$, it is able to take the action $a^*$ in the next episode with $\Theta(1)$ probability by executing $\pi$. However, as proved above, this is impossible unless $KH=2^{\Omega\left(\min\{d,H\}\right)}$.
\end{proof}

\section{Ensuring Minimum Reachability}
\label{sec:minreach}

As mentioned in Section~\ref{sec:discussion}, the hardness results holds even when all states can be reached with probability $1$. 
In this section, we demonstrate how to modify the construction in Section~\ref{sec:lower} so that all states can be reached with probability $1$.

Based on the construction in Section~\ref{sec:lower}, we will modify $\mathcal{M}_{a^*}$ in the following manner.
\begin{enumerate}
    \item Remove the state $\overline{a^*}$.
    \item Create a deterministic initial state $s$; at state $s$, taking action $a\neq a^*$ leads to state $\overline{a}$; taking a special action $0$ leads to the terminal state $f$; taking action $a^*$ leads to state $s$ again.
    \item Set $\phi(s,0)=(-1,\mathbf{0}$); $\phi(s,a)=(-1,v(a))$.
    \item $R(s,a)=-1-2\gamma$ for $a\in[m]\neq a^*$. $R(s,a^*)=0$. $R(s,0)=-1$.
\end{enumerate}
It can be seen that now, $\eta_{\min}=1$ because to reach state $\overline{a}$ at level $h$, one can simply take the action sequence $a^*,\cdots, a^*,a$. Realizability at the initial state $s$ can also be verified (with $\theta^*=(1,v(a^*))$ as before).

\section{Proof of Theorem~\ref{thm:dsec}}
\label{sec:dsecproof}
\begin{proof}[Proof under Assumption~\ref{assumption:var}]
Let us set $\beta=8$, $\lridge=\epsilon^2$, $\lambda_r=\epsilon^6$, $B=2d\log(\frac{d}{\lambda_r})$, $\epsilon_1=\epsilon^2$, 
$\epsilon_2=\frac{\lambda_r}{2B}$, 
$N=\frac{d\cdot\mathrm{log}(1/\epsilon_2)}{\epsilon_2^2}$. Recall that $\epsilon\le \text{poly}(\Delta_{\min},1/\cvar,1/d,1/H)$.
First, by Lemma~\ref{lemma:prob}, the event $\Omega$ holds with probability $1-\epsilon$; we will condition on this event in the following proof. By lemma~\ref{lemma:policysize}, when the algorithm terminates, $|\Pi_h|\le B$ for all $h\in [H]$. Note that the this implies that Algorithm~\ref{algo:layer} is called or restarted at most $H\cdot(1+B)$ times. In each call or restart of Algorithm~\ref{algo:layer}, at most $NB+N$ trajectories are sampled. Therefore, when the algorithm terminates, at most
$$H(1+B)\cdot(NB+N)\le \poly\left(1/\epsilon\right)$$
trajectories are sampled.

It remains to show that the greedy policy with respect to $\theta_1,\cdots,\theta_H$ is indeed $\epsilon$-optimal with high probability. To that end, let us state the following claims about the algorithm.
\begin{enumerate}
    \item Each time Line 9 is reached in Algorithm~\ref{algo:layer}, $\forall \pi\in\Pi_h$, define $\tilde\pi_h$ as in~(\ref{eq:policy}), $\forall h'>h$,
    \begin{equation}
        \label{eq:claim1}
        \E_{s_{h'}\sim \D^{\tilde\pi_{h}}_{h'}}\left[\sup_{a\in\mathcal{A}} \left|\phi(s_{h'},a)^\top (\theta_h-\theta^*_h)\right|^2\right] \le \frac{\Delta_{\min}^2\epsilon}{4H}.
    \end{equation} 
    \item Each time when $\theta_h$ is updated at Line 17, $\forall \pi\in\Pi_h$, define the associated covariance matrix at step $h$ as $\Sigma_h^\pi=\E_{s_h\sim\D_h^\pi,a_h\sim \rho_{s_h}}\left[\phi(s_h,a_h)\phi(s_h,a_h)^\top\right]$. Then $\Vert \theta_h-\theta^*_h\Vert^2_{\Sigma_h^\pi}\le 6B\cvar\epsilon^2$. It follows that
    \begin{equation}
        \label{eq:claim2}
 \E_{s_{h}\sim \D^{\pi}_{h}}\left[\sup_{a\in\mathcal{A}} \left|\phi(s_{h},a)^\top (\theta_h-\theta^*_h)\right|^2\right] \le \frac{\Delta_{\min}^2\epsilon}{4H}.
    \end{equation}
\end{enumerate}
Note that by the first claim with $h=0$, it follows that for the greedy policy $\hat\pi$ ($\tilde\pi_0$ is always the greedy policy) w.r.t. $\{\theta_h\}_{h\in [H]}$, $\forall h\in [H]$,
\begin{equation*}
    \E_{s_h\sim \D^{\hat\pi}_h}\left[\sup_{a\in\mathcal{A}} \left|\phi(s_{h},a)^\top (\theta_h-\theta^*_h)\right|^2\right] \le \frac{\Delta_{\min}^2\epsilon}{4H}.
\end{equation*}
Consequently by Markov's inequality,
\begin{align*}
    \Pr_{s_h\sim \D^{\hat\pi}_h}\left[\exists a\in\mathcal{A}:\left|\phi(s_{h},a)^\top (\theta_h-\theta^*_h)\right|>\frac{\Delta_{\min}}{2}\right]\le \frac{\epsilon}{H}.
\end{align*}
By Assumption~\ref{assumption:gap} and the fact that $\hat\pi$ takes the greedy action w.r.t. $\theta_h$, this implies that 
\begin{align*}
    \Pr_{s_h\sim \D^{\hat\pi}_h}\left[\hat\pi_h(s_h)\neq\pi^*_h(s_h)\right]\le \frac{\epsilon}{H}.
\end{align*}
Thus for a random trajectory induced by $\hat\pi$, with probability at least $1-\epsilon$, $\hat\pi_h(s_h)=\pi^*_h(s_h)$ for all $h=1,\cdots,H$, which proves the theorem.

It remains to prove the two claims.

\paragraph{Proof of (\ref{eq:claim2}).} We first prove the second claim based on the assumption that the first claim holds when Line $9$ is reached in the same execution of LearnLevel. By the first claim and the same arguments above, $\forall\pi\in\Pi_h$, construct $\tilde\pi_h$ as in (\ref{eq:policy}), then $\Pr_{s_{h'}\sim\D^{\tilde\pi_h}_{h'}}\left[\tilde\pi_{h}(s_{h'})\neq \pi^*(s_{h'})\right]\le \epsilon/H$. Thus, 
\begin{align*}
 \E_{s_{h+1}\sim\D^{\tilde\pi_h}_{h+1}}\left[V^{\tilde\pi_h}_{h+1}(s_{h+1})\right]\ge \E_{s_{h+1}\sim\D^{\tilde\pi_h}_{h+1}}\left[V^{*}_{h+1}(s_{h+1})\right]-\epsilon.
\end{align*}
By Assumption~\ref{assumption:var}, this suggests that
\begin{align*}
    \E_{s_{h+1}\sim\D^{\tilde\pi_h}_{h+1}}\left[\left(V^{\tilde\pi_h}_{h+1}(s_{h+1})-V_{h+1}^*(s_{h+1})\right)^2\right]\le \cvar\epsilon^2.
\end{align*}
When $(s_h,a_h,y)$ is sampled,
\begin{align*}
    \E\left[y|s_h,a_h\right]&= \E\left[R(s_h,a_h)+V^{\tilde\pi_h}_{h+1}(s_{h+1})|s_h,a_h\right]\\ &=Q^*(s_h,a_h)+\E\left[V^{\tilde\pi_h}_{h+1}(s_{h+1})-V_{h+1}^*(s_{h+1})|s_h,a_h\right],
\end{align*}
where the expectation is over trajectories induced by $\tilde\pi_h$. In other words, $y_i:=\sum_{h'\ge h}r_h^i$ can be written as $\phi(s_h^i,a_h^i)^\top\theta^*_h+b_i+\xi_i$, where $\xi_i$ is mean-zero independent noise with $|\xi_i|\le 2$ almost surely and $b_i:=\sum_{h'>h}r_{h'}^i-V^*_{h+1}(s^i_{h+1})$ satisfies $\E[b_i^2]\le \cvar\epsilon^2$. Note that $\theta_h$ is the ridge regression estimator for this linear model. By Lemma~\ref{lemma:ridge},
\begin{align*}
    \E_{\pi\sim\text{Unif}(\Pi_h), s_h\sim\D^{\pi}_h,a_h\sim\rho_{s_h}}\left[\left|\phi(s_h,a_h)^\top(\theta_h-\theta^*_h)\right|^2\right]&\le 4(\cvar\epsilon^2+\epsilon_1+\lridge)\le 6\cvar\epsilon^2.
\end{align*}
It follows that $\forall \pi\in\Pi_h$,
\begin{align*}
    \E_{ s_h\sim\D^{\pi}_h,a_h\sim\rho_{s_h}}\left[\left|\phi(s_h,a_h)^\top(\theta_h-\theta^*_h)\right|^2\right]&\le |\Pi_h|\cdot 6\cvar\epsilon^2\le 6B\cvar\epsilon^2.
\end{align*}
Now, by Lemma~\ref{lemma:goptdesign}, 
\begin{align*}
     &\E_{s_{h}\sim \D^{\pi}_{h}}\left[\sup_{a\in\mathcal{A}} \left|\phi(s_{h},a)^\top (\theta_h-\theta^*_h)\right|^2\right]\\
     \le& \E_{s_{h}\sim \D^{\pi}_{h}}\left[\sup_{a\in\mathcal{A}} \Vert\phi(s_{h},a)\Vert^2_{(\Sigma_h^{\pi})^{-1}}\right]\cdot \Vert\phi_h-\phi^*_h\Vert^2_{\Sigma_h^\pi}\\
     \le& d^2\cdot 6B\cvar\epsilon^2\le \frac{\Delta_{\min}^2\epsilon}{4H}.
\end{align*}
This proves the second claim.

\paragraph{Proof of (\ref{eq:claim1}).} Now, let us prove the first claim, assuming that the second claim holds for the last update of any $\theta_h$. By observing Algorithm~\ref{algo:layer}, if Line $9$ is reached, during the last execution of the first \texttt{for} loop (i.e. Lines $1$ to $8$), the \texttt{if} clause at Line $5$ must have returned False every time (otherwise the algorithm will restart). It follows that during the last execution of Lines $1$ to $8$, neither $\{\theta_h\}_{h\in [H]}$ nor $\{\Pi_h\}_{h\in [H]}$ is updated.

Consider the \texttt{if} clause when checking $\pi\in\Pi_h$ for layer $h'$. Recall that $$\Sigma_{h'}^{\tilde\pi_h}=\E_{s_{h'}\sim\D_h^{\tilde\pi_h},a_{h'}\sim \rho_{s_{h'}}}\left[\phi(s_{h'},a_{h'})\phi(s_{h'},a_{h'})^\top\right].$$ Also define $\Sigma^*_{h'}:=\frac{\lambda_r}{|\Pi_{h'}|}I+\E_{\pi\sim\text{Unif}(\Pi_{h'})}\Sigma_{h'}^{\pi}$. Then by Lemma~\ref{lemma:dsec},
\begin{align*}
\Vert \left(\Sigma^*_{h'}\right)^{-\frac{1}{2}}\Sigma_{h'}^{\tilde\pi_h}\left(\Sigma^*_{h'}\right)^{-\frac{1}{2}}\Vert_2 \le 3\beta |\Pi_{h'}|.
\end{align*}
It follows that
\begin{align*}
\Vert \theta_h-\theta^*_h\Vert^2_{\Sigma_{h'}^{\tilde\pi_h}} &= (\theta_h-\theta^*_h)^\top \Sigma_{h'}^{\tilde\pi_h} (\theta_h-\theta^*_h)\\
&=\left((\Sigma^{*}_{h'})^{\frac{1}{2}}(\theta_h-\theta^*_h)\right)^{\top}\left(\left(\Sigma^*_{h'}\right)^{-\frac{1}{2}}\Sigma_{h'}^{\tilde\pi_h}\left(\Sigma^*_{h'}\right)^{-\frac{1}{2}}\right)\left((\Sigma^{*}_{h'})^{\frac{1}{2}}(\theta_h-\theta^*_h)\right)\\
&\le \Vert \theta_h-\theta^*_h\Vert^2_{\Sigma^*_{h'}}\cdot \Vert \left(\Sigma^*_{h'}\right)^{-\frac{1}{2}}\Sigma_{h'}^{\tilde\pi_h}\left(\Sigma^*_{h'}\right)^{-\frac{1}{2}}\Vert_2\\
&\le 3\beta B\cdot\left(\lambda_r\cdot\left(\frac{2}{\lridge}\right)^2+6B\cvar\epsilon^2\right)\\
&\le 24B^2\cdot 10\cvar\epsilon^2.
\end{align*}
By Lemma~\ref{lemma:goptdesign},
\begin{equation*}
\E_{s_{h'}\sim\D_h^{\tilde\pi_h}}\left[\sup_{a\in\mathcal{A}} 
\Vert \phi(s_{h'},a)\Vert^2_{(\Sigma_{h'}^{\tilde\pi_h})^{-1}}
\right]\le d^2.
\end{equation*}
As a result,
\begin{align*}
    \E_{s_{h'}\sim \D^{\tilde\pi_{h}}_{h'}}\left[\sup_{a\in\mathcal{A}} \left|\phi(s_{h'},a)^\top (\theta_h-\theta^*_h)\right|^2\right] &\le 
    \E_{s_{h'}\sim\D_h^{\tilde\pi_h}}\left[
     \Vert \theta_h-\theta^*_h\Vert^2_{\Sigma_{h'}^{\tilde\pi_h}}\cdot
    \sup_{a\in\mathcal{A}} \Vert \phi(s_{h'},a)\Vert^2_{(\Sigma_{h'}^{\tilde\pi_h})^{-1}}
    \right]\\
    &\le 240B^2\cvar\epsilon^2\cdot d^2\le \frac{\epsilon\Delta_{\min}^2}{4H}.
\end{align*}
This proves the first claim. The failure probability of the algorithm is controlled by Lemma~\ref{lemma:prob}.
\end{proof}

\begin{proof}[Proof under Assumption~\ref{assumption:hypercontractive}]
The proof under Assumption~\ref{assumption:hypercontractive} is quite similar, except that we will use Lemma~\ref{lemma:ridgehyper} instead of Lemma~\ref{lemma:ridge} for the analysis of ridge regression.

Let us set $\beta=8$, $\epsilon_0=\epsilon^2$, $\lridge=\epsilon^3$, $\lambda_r=\epsilon^9$, $B=2d\log(\frac{d}{\lambda_r})$, $\epsilon_1=\epsilon^3$, 
$\epsilon_2=\frac{\lambda_r}{2B}$, 
$N=\frac{d}{\epsilon_2^3}$. Recall that $\epsilon\le \poly(\Delta_{\min},1/\chyper, 1/d,1/H)$. We will state similar claims about the algorithm.

\begin{enumerate}
    \item Each time Line 9 is reached in Algorithm~\ref{algo:layer}, $\forall \pi\in\Pi_h$, define $\tilde\pi_h$ as in~(\ref{eq:policy}), $\forall h'>h$,
    \begin{equation}
        \label{eq:claim1h}
        \E_{s_{h'}\sim \D^{\tilde\pi_{h}}_{h'}}\left[\sup_{a\in\mathcal{A}} \left|\phi(s_{h'},a)^\top (\theta_h-\theta^*_h)\right|^2\right] \le \frac{\Delta_{\min}^2\epsilon_0}{4H}.
    \end{equation} 
    \item Each time when $\theta_h$ is updated at Line 17, $\forall \pi\in\Pi_h$, define the associated covariance matrix at step $h$ as $\Sigma_h^\pi=\E_{s_h\sim\D_h^\pi,a_h\sim \rho_{s_h}}\left[\phi(s_h,a_h)\phi(s_h,a_h)^\top\right]$. Then $\Vert \theta_h-\theta^*_h\Vert^2_{\Sigma_h^\pi}\le \frac{\Delta_{\min}^2\epsilon_0}{120HBd^2}$. It follows that
    \begin{equation}
        \label{eq:claim2h}
 \E_{s_{h}\sim \D^{\pi}_{h}}\left[\sup_{a\in\mathcal{A}} \left|\phi(s_{h},a)^\top (\theta_h-\theta^*_h)\right|^2\right] \le \frac{\Delta_{\min}^2\epsilon_0}{4H}.
    \end{equation}
\end{enumerate}
We now prove the two claims in similar fashion.

\paragraph{Proof of (\ref{eq:claim2}).} We first prove the second claim based on the assumption that the first claim holds when Line $9$ is reached in the same execution of LearnLevel. By the first claim, $\forall\pi\in\Pi_h$, construct $\tilde\pi_h$ as in (\ref{eq:policy}), then
\begin{equation}
    \label{eq:policycorrect}
    \Pr_{s_{h'}\sim\D^{\tilde\pi_h}_{h'}}\left[\tilde\pi_{h}(s_{h'})\neq \pi^*(s_{h'})\right]\le \epsilon_0/H.
\end{equation}
When $(s_h,a_h,y)$ is sampled,
\begin{align*}
    \E\left[y|s_h,a_h\right]&= \E\left[R(s_h,a_h)+V^{\tilde\pi_h}_{h+1}(s_{h+1})|s_h,a_h\right]\\ &=Q^*(s_h,a_h)+\E\left[V^{\tilde\pi_h}_{h+1}(s_{h+1})-V_{h+1}^*(s_{h+1})|s_h,a_h\right],
\end{align*}
where the expectation is over trajectories induced by $\tilde\pi_h$. In other words, $y_i:=\sum_{h'\ge h}r_h^i$ can be written as $\phi(s_h^i,a_h^i)^\top\theta^*_h+b_i+\xi_i$, where $\xi_i$ is mean-zero independent noise with $|\xi_i|\le 2$ almost surely, and $b_i$ is defined as
$$b_i:=-\sum_{h'>h}\left(V^*(s_{h'}^i)-Q^*(s_{h'}^i,a_{h'}^i)\right).$$
Here $\E[\xi_i]=0$ because $$\E[\xi_i]=\E\left[\sum_{h'\ge h}r_{h'}^i\right]-Q^*_h(s^i_h,a^i_h)-\E[b_i]=Q^{\tilde\pi_h}(s_h^i,a_h^i)-Q^*(s_{h}^i,a_h^i)+\left(Q^*(s_{h}^i,a_h^i)-Q^{\tilde\pi_h}(s_h^i,a_h^i)\right)=0.$$
By (\ref{eq:policycorrect}), $\Pr[b_i\neq 0]\le \epsilon_0$. Thus by Lemma~\ref{lemma:ridgehyper}, 
\begin{align*}
    \E_{\pi\sim\text{Unif}(\Pi_h), s_h\sim\D^{\pi}_h,a_h\sim\rho_{s_h}}\left[\left|\phi(s_h,a_h)^\top(\theta_h-\theta^*_h)\right|^2\right]&\le 8\left(\epsilon_1+\lridge\right)+288\epsilon_0^{1.5}\chyper^{2.5}d^{4.5}\left(\frac{2B}{\epsilon}\right)^{0.5}\\
    &\le 16\epsilon^3+288\epsilon^{2.5}\chyper^{2.5}d^{4.5}(2B)^{0.5}.
\end{align*}
It follows that $\forall \pi\in\Pi_h$,
\begin{align*}
    \E_{ s_h\sim\D^{\pi}_h,a_h\sim\rho_{s_h}}\left[\left|\phi(s_h,a_h)^\top(\theta_h-\theta^*_h)\right|^2\right]&\le |\Pi_h|\cdot \left(16\epsilon^2+288\epsilon^{2.5}\chyper^{2.5}d^{4.5}(2B)^{0.5}\right)\le \frac{\Delta_{\min}^2\epsilon_0}{120HBd^2},
\end{align*}
where we used the fact $\epsilon\le \poly(\Delta_{\min}, 1/\chyper, 1/d, 1/H)$. Now, by Lemma~\ref{lemma:goptdesign}, 
\begin{align*}
     \E_{s_{h}\sim \D^{\pi}_{h}}\left[\sup_{a\in\mathcal{A}} \left|\phi(s_{h},a)^\top (\theta_h-\theta^*_h)\right|^2\right]
     \le& \E_{s_{h}\sim \D^{\pi}_{h}}\left[\sup_{a\in\mathcal{A}} \Vert\phi(s_{h},a)\Vert^2_{(\Sigma_h^{\pi})^{-1}}\right]\cdot \Vert\phi_h-\phi^*_h\Vert^2_{\Sigma_h^\pi}\\
     \le& d^2\cdot \frac{\Delta_{\min}^2\epsilon_0}{120HBd^2}\le \frac{\Delta_{\min}^2\epsilon_0}{4H}.
\end{align*}
This proves the second claim.

\paragraph{Proof of (\ref{eq:claim1h}).} Now, let us prove the first claim, assuming that the second claim holds for the last update of any $\theta_h$. Consider Line $9$ when checking for $\pi\in\Pi_h$ for layer $h'$. Recall that $$\Sigma_{h'}^{\tilde\pi_h}=\E_{s_{h'}\sim\D_h^{\tilde\pi_h},a_{h'}\sim \rho_{s_{h'}}}\left[\phi(s_{h'},a_{h'})\phi(s_{h'},a_{h'})^\top\right].$$
Similar to the proof under Assumption~\ref{assumption:var}, we can bound $\Vert \theta_h-\theta^*_h\Vert_{\Sigma_{h'}^{\tilde\pi_h}}$ by
\begin{align*}
\Vert \theta_h-\theta^*_h\Vert^2_{\Sigma_{h'}^{\tilde\pi_h}}&\le  \Vert \theta_h-\theta^*_h\Vert^2_{\Sigma^*_{h'}}\cdot \Vert \left(\Sigma^*_{h'}\right)^{-\frac{1}{2}}\Sigma_{h'}^{\tilde\pi_h}\left(\Sigma^*_{h'}\right)^{-\frac{1}{2}}\Vert_2\\
&\le 3\beta B\cdot\left(\lambda_r\cdot\left(\frac{2}{\lridge}\right)^2+\frac{\Delta_{\min}^2\epsilon_0}{120HBd^2}\right)\\
&\le 96B\epsilon^3+\frac{\Delta_{\min}\epsilon_0}{5Hd^2}. 
\end{align*}
By Lemma~\ref{lemma:goptdesign}, $
\E_{s_{h'}\sim\D_h^{\tilde\pi_h}}\left[\sup_{a\in\mathcal{A}} 
\Vert \phi(s_{h'},a)\Vert^2_{(\Sigma_{h'}^{\tilde\pi_h})^{-1}}
\right]\le d^2$. Consequently
\begin{align*}
    \E_{s_{h'}\sim \D^{\tilde\pi_{h}}_{h'}}\left[\sup_{a\in\mathcal{A}} \left|\phi(s_{h'},a)^\top (\theta_h-\theta^*_h)\right|^2\right] &\le 
    \E_{s_{h'}\sim\D_h^{\tilde\pi_h}}\left[
     \Vert \theta_h-\theta^*_h\Vert^2_{\Sigma_{h'}^{\tilde\pi_h}}\cdot
    \sup_{a\in\mathcal{A}} \Vert \phi(s_{h'},a)\Vert^2_{(\Sigma_{h'}^{\tilde\pi_h})^{-1}}
    \right]\\
    &\le 96B\epsilon^3d^2+\frac{\Delta_{\min}\epsilon_0}{5H}\le \frac{\Delta_{\min}\epsilon_0}{4H}.
\end{align*}
In the last inequality we used $\epsilon_0=\epsilon^2$ and $\epsilon\le \poly(\Delta_{\min},1/d,1/H)$. This proves (\ref{eq:claim1h}). Finally the failure probability is controlled in Lemma~\ref{lemma:prob}.
\end{proof}

\begin{lemma}[Covariance concentration~\citep{tropp2015introduction}]
\label{lemma:matrixchernoff}
Suppose $M_1,\cdots,M_N\in\R^{d\times d}$ are i.i.d. random matrices drawn from a distribution $\D$ over positive semi-definite matrices. If $\Vert M_t\Vert_F\le 1$ almost surely and $N=\Omega\left(\frac{d\log(d/\delta)}{\epsilon^2}\right)$, then with probability $1-\delta$,
\begin{equation*}
    \left\Vert \frac{1}{N}\sum_{i=1}^N M_t-\E_{M\sim\D}[M]\right\Vert_2 \le \epsilon.
\end{equation*}
\end{lemma}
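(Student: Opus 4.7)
The plan is to obtain this inequality as a direct corollary of the (non-commutative) Matrix Bernstein inequality applied to the centered random matrices $X_t := M_t - \E_{M\sim\D}[M]$, which are i.i.d., mean-zero, and symmetric (since each $M_t$ is PSD and hence symmetric). Two ingredients suffice: an almost-sure operator-norm bound on $X_t$, and a bound on the matrix variance parameter $\sigma^2 := \bigl\|\sum_{t=1}^N \E[X_t^2]\bigr\|_2$.

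For the almost-sure bound, I would use that the operator norm is dominated by the Frobenius norm, so $\|M_t\|_2 \le \|M_t\|_F \le 1$, which also yields $\|\E[M]\|_2 \le 1$ by Jensen. Hence $\|X_t\|_2 \le 2$ almost surely. For the variance, the key observation is that PSD-ness of $M_t$ together with $\|M_t\|_2 \le 1$ implies $M_t \preceq I$ and therefore $M_t^2 \preceq M_t$; taking expectations gives $\E[M_t^2] \preceq \E[M_t]$, whose operator norm is at most $1$. Since $\E[X_t^2] = \E[M_t^2] - (\E[M_t])^2 \preceq \E[M_t^2]$, independence gives $\sigma^2 \le N$.

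Then I would invoke Matrix Bernstein in the form
\begin{equation*}
\Pr\!\left[\,\Bigl\|\sum_{t=1}^N X_t\Bigr\|_2 \ge u\,\right] \;\le\; 2d\,\exp\!\left(-\frac{u^2/2}{\sigma^2 + 2u/3}\right),
\end{equation*}
and set $u = N\epsilon$. Using $\sigma^2 \le N$ and $\epsilon \le 1$, the exponent becomes $\Omega(N\epsilon^2)$, so the bound on the right is at most $2d\exp(-cN\epsilon^2)$ for some absolute constant $c>0$. The choice $N = \Omega(d\log(d/\delta)/\epsilon^2)$ therefore makes this at most $\delta$ (in fact $N = \Omega(\log(d/\delta)/\epsilon^2)$ would already suffice; the extra factor of $d$ in the statement is slack that simplifies the presentation). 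Dividing both sides of $\|\sum X_t\|_2 \le N\epsilon$ by $N$ gives the claimed conclusion.

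There is really no conceptual obstacle: the only thing to be careful about is deducing the operator-norm bound and the Loewner inequality $M_t^2 \preceq M_t$ from the Frobenius-norm hypothesis, both of which rely crucially on $M_t$ being PSD. Everything else is a black-box application of a standard matrix tail bound cited in the same reference.
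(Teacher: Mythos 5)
Your argument is correct: the reduction to Matrix Bernstein via $\|X_t\|_2\le 2$ and the Loewner chain $M_t\preceq I\Rightarrow M_t^2\preceq M_t\Rightarrow \|\E[X_t^2]\|_2\le 1$ is sound, and the resulting sample-size requirement $N=\Omega(\log(d/\delta)/\epsilon^2)$ indeed implies the (slacker) bound stated. The paper itself gives no proof of this lemma---it is cited directly from Tropp's monograph---and your derivation is exactly the standard one that reference supplies, so there is nothing to reconcile.
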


\begin{lemma}[Risk bound for ridge regression, Lemma A.2~\citet{du2019provably}]
\label{lemma:ridge}
Suppose that $(x_1,y_1)$, $\cdots$, $(x_N,y_N)$ are i.i.d. data drawn from $\D$ with 
$$y_i=\theta^\top x_i+b_i+\xi_i,$$
where $\E_{(x_i,y_i)\sim \D}[b_i^2]\le \eta$, $|\xi_i|\le 2n$ almost surely and $\E[\xi_i]=0$. Let the ridge regression estimator be
\begin{equation*}
\hat\theta=\left(\sum_{i=1}^N x_ix_i^\top + N\lridge\cdot I\right)^{-1}\cdot\sum_{i=1}^N x_iy_i.
\end{equation*}
If $N=\Omega\left(\frac{d}{\epsilon_N^2}\log(\frac{d}{\delta})\right)$, then with probability at least $1-\delta$,
\begin{equation*}
    \E_{x\sim\D}\left[\left((\hat\theta-\theta)^\top x\right)^2\right] \le 4\left(\eta+\epsilon_N+\lridge\right).
\end{equation*}
\end{lemma}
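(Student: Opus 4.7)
The plan is the classical bias--variance decomposition for ridge regression, carried out in the regularized empirical prediction norm and then transferred to the population norm via matrix concentration. Writing $\hat\Sigma_N := \tfrac{1}{N}\sum_i x_ix_i^\top$, $\Sigma_\lambda := \hat\Sigma_N + \lridge I$, and $\Sigma := \E[xx^\top]$, I would first use the closed form $\hat\theta = \Sigma_\lambda^{-1}\cdot\tfrac{1}{N}\sum_i x_i y_i$ to obtain the decomposition
\begin{equation*}
\hat\theta - \theta \;=\; \underbrace{-\lridge\,\Sigma_\lambda^{-1}\theta}_{e_{\mathrm{reg}}} \;+\; \underbrace{\Sigma_\lambda^{-1}\tfrac{1}{N}\textstyle\sum_i x_i b_i}_{e_{\mathrm{bias}}} \;+\; \underbrace{\Sigma_\lambda^{-1}\tfrac{1}{N}\textstyle\sum_i x_i \xi_i}_{e_{\mathrm{noise}}},
\end{equation*}
so that $\E_x[((\hat\theta-\theta)^\top x)^2] = \|\hat\theta-\theta\|_\Sigma^2 \le 3(\|e_{\mathrm{reg}}\|_\Sigma^2+\|e_{\mathrm{bias}}\|_\Sigma^2+\|e_{\mathrm{noise}}\|_\Sigma^2)$.

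The next step is to replace $\|\cdot\|_\Sigma$ by the tractable $\|\cdot\|_{\Sigma_\lambda}$. Since $\|x_i\|_2=O(1)$, Lemma~\ref{lemma:matrixchernoff} applied with the prescribed $N=\Omega(d\log(d/\delta)/\epsilon_N^2)$ yields $\|\hat\Sigma_N-\Sigma\|_2\le\epsilon_N$ with probability at least $1-\delta/3$; hence $\Sigma\preceq \hat\Sigma_N + \epsilon_N I \preceq \Sigma_\lambda + \epsilon_N I$, and for any $v$, $\|v\|_\Sigma^2 \le \|v\|_{\Sigma_\lambda}^2 + \epsilon_N\|v\|_2^2$. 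The ridge penalty together with the $O(1)$ bound on $\|\theta\|_2$ keeps $\|\hat\theta-\theta\|_2$ bounded, so the additive slack contributes only $O(\epsilon_N)$ to the final bound.

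I would then bound each summand in the $\Sigma_\lambda$ norm. The regularization term is immediate: $\|e_{\mathrm{reg}}\|_{\Sigma_\lambda}^2 = \lridge^2\theta^\top\Sigma_\lambda^{-1}\theta \le \lridge\|\theta\|_2^2 = O(\lridge)$. For the approximation bias, setting $Z=[x_1,\dots,x_N]^\top$ and $b=(b_i)_{i\le N}$, I would write $\|e_{\mathrm{bias}}\|_{\Sigma_\lambda}^2=\tfrac{1}{N^2}b^\top Z\Sigma_\lambda^{-1}Z^\top b$ and observe $Z\Sigma_\lambda^{-1}Z^\top\preceq N I$ (since $\Sigma_\lambda^{-1/2}Z^\top Z\Sigma_\lambda^{-1/2}\preceq NI$), giving $\|e_{\mathrm{bias}}\|_{\Sigma_\lambda}^2\le \tfrac{1}{N}\sum_i b_i^2\le 2\eta$ by Markov's inequality on the empirical second moment. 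For the noise term, a vector Bernstein inequality on the mean-zero, bounded summands $x_i\xi_i$ gives $\|\tfrac{1}{N}\sum_i x_i\xi_i\|_{\Sigma_\lambda^{-1}}^2 \lesssim d/N + \log(1/\delta)/N \le \epsilon_N$ at the prescribed sample size. Summing the three contributions, tripling, and absorbing constants into $\epsilon_N$ and $\lridge$ yields the claimed bound $4(\eta+\epsilon_N+\lridge)$.

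The main subtlety will be the bias term, because the $b_i$'s may be correlated with the $x_i$'s: the hypotheses provide only $\E[b_i^2]\le\eta$, with no independence or tail control. The key point is that the matrix inequality $Z\Sigma_\lambda^{-1}Z^\top\preceq NI$ collapses the quadratic form to the scalar average $\tfrac{1}{N}\sum_i b_i^2$ without needing any distributional assumption on $b_i$ beyond its second moment. The remaining effort is routine: a union bound across the matrix concentration event, the vector Bernstein event for the noise, and the Markov event for $\tfrac{1}{N}\sum_i b_i^2$, plus bookkeeping to consolidate the various $O(1)$ constants into the stated factor of $4$.
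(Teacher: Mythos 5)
The paper never proves this lemma --- it is imported verbatim as Lemma~A.2 of \citet{du2019provably} --- so there is no in-paper argument to compare against. Your skeleton (closed-form decomposition into a regularization term $-\lridge\Sigma_\lambda^{-1}\theta$, a bias term, and a noise term; bounding each in the $\Sigma_\lambda$-norm; the observation $Z\Sigma_\lambda^{-1}Z^\top\preceq NI$ to collapse the possibly $x$-correlated bias to $\tfrac1N\sum_i b_i^2$; matrix concentration to pass to the population norm) is the standard and essentially the intended route. Two steps, however, do not close as written.

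First, the empirical-to-population transfer. From $\Sigma\preceq\hat\Sigma_N+\epsilon_N I$ you get $\|v\|_\Sigma^2\le\|v\|_{\Sigma_\lambda}^2+\epsilon_N\|v\|_2^2$, but your claim that $\|\hat\theta-\theta\|_2=O(1)$ is unsupported: the ridge penalty only gives $\|\hat\theta-\theta\|_2^2\le\lridge^{-1}\|\hat\theta-\theta\|_{\Sigma_\lambda}^2$ (or, crudely, $\|\hat\theta\|_2=O(1/\lridge)$), so the additive slack is of order $(\epsilon_N/\lridge)\cdot\|\hat\theta-\theta\|_{\Sigma_\lambda}^2$ rather than $O(\epsilon_N)$. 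This is absorbed only when $\epsilon_N\lesssim\lridge$, a relation the lemma statement does not impose (it does hold in the paper's application, where $\epsilon_1=\lridge=\epsilon^2$, but your proof should either assume it or argue the transfer differently). Second, the bias term: Markov's inequality applied to $\E\bigl[\tfrac1N\sum_i b_i^2\bigr]\le\eta$ yields $\tfrac1N\sum_i b_i^2\le\eta/\delta$ with probability $1-\delta$, not $2\eta$; with only a second-moment hypothesis on $b_i$ there is no way to get a $\delta$-free multiple of $\eta$ at confidence $1-\delta$. You need either boundedness of $b_i$ (true where the lemma is invoked, since $b_i$ is a difference of bounded value functions) together with a Bernstein bound, or you must accept the $\eta/\delta$ degradation in the conclusion. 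The remaining pieces --- $\|e_{\mathrm{reg}}\|_{\Sigma_\lambda}^2\le\lridge\|\theta\|_2^2$, the $NI$ domination for the bias, and the $d\log(\cdot)/N$ rate for the noise term --- are sound, though the weighted-norm bound on the noise is cleanest via a self-normalized (determinant) inequality rather than plain vector Bernstein on the unweighted norm.
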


\begin{lemma}[Failure probability]
\label{lemma:prob}
Define the following events regarding the execution of Algorithm~\ref{algo:layer}.
\begin{enumerate}
    \item $\Omega_1$: Each time $\Sigma_h$ is updated,
    \begin{equation}
    \label{eq:condition1}
        \left\Vert \Sigma_h-\E_{\pi\sim\text{Unif}(\Pi_h),s_h\sim\D^{\pi}_h,a_h\sim\rho_{s_h}}\left[\phi(s_h,a_h)\phi(s_h,a_h)^\top\right]\right\Vert_2\le \epsilon_2.
    \end{equation}
    \item $\Omega_2$: Each time $\theta_h$ is updated,
    \begin{equation}
    \label{eq:condition2}
    \E_{\pi\sim\text{Unif}(\Pi_h),s\sim\D^{\pi}_h,a\sim\rho_s}\left[\left((\theta_h-\theta^*_h)^\top \phi(s,a)\right)^2\right] \le 4\left(\eta+\epsilon_1+\lridge\right),
    \end{equation}
    where $\eta$ is defined as in Lemma~\ref{lemma:ridge}.
    \item $\Omega_3$: Each time $\theta_h$ is updated,
    \begin{equation}
    \label{eq:condition3}
    \E_{\pi\sim\text{Unif}(\Pi_h),s\sim\D^{\pi}_h,a\sim\rho_s}\left[\left((\theta_h-\theta^*_h)^\top \phi(s,a)\right)^2\right] \le 288\eta^{1.5}C^{2.5}d^{4.5}\left(\frac{2B}{\epsilon}\right)^{0.5},
    \end{equation}
    where $\eta$ and $C$ are defined as in Lemma~\ref{lemma:ridgehyper}.
\end{enumerate}
Then under Assumption~\ref{assumption:var}, $\Pr[\Omega_1\cap \Omega_2]\ge 1-\epsilon$. Alternatively, under Assumption~\ref{assumption:hypercontractive},  $\Pr[\Omega_1\cap \Omega_3]\ge 1-\epsilon$.
\end{lemma}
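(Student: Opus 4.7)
The plan is to establish both statements by a union bound over every concentration event that occurs during the entire execution of Algorithm~\ref{algo:layer}. First, I would count how many such events there can be. By Lemma~\ref{lemma:policysize} (invoked in the proof of Theorem~\ref{thm:dsec}), each exploratory set satisfies $|\Pi_h|\le B$ at termination, so over the whole run LearnLevel is invoked at most $H(B{+}1)$ times; each invocation performs at most $|\Pi_h|\cdot H\le HB$ concentration checks on the $\hat\Sigma_{h'}$ at Line~\ref{line:check}, plus exactly one ridge update of some $\theta_h$ at Line~17. Hence the total number of concentration events of type $\Omega_1$ is bounded by some $M_1=\mathrm{poly}(H,B)=\mathrm{poly}(1/\epsilon)$, and similarly the number of regression events of type $\Omega_2$ (or $\Omega_3$) is bounded by $M_2=\mathrm{poly}(1/\epsilon)$. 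Set $\delta:=\epsilon/(2(M_1+M_2))$, which is still polynomial in $\epsilon$.

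For $\Omega_1$, at each of the $M_1$ moments a fresh batch of $N$ samples $\phi(s_{h'}^j,a_{h'}^j)\phi(s_{h'}^j,a_{h'}^j)^\top$ is drawn i.i.d.\ from the target distribution (this is the point of collecting new rollouts inside the loop). Since $\Vert\phi\Vert_2=O(1)$, Lemma~\ref{lemma:matrixchernoff} gives the bound in~(\ref{eq:condition1}) with failure probability $\delta$ provided $N=\Omega\bigl(d\log(d/\delta)/\epsilon_2^2\bigr)$. Plugging in $\epsilon_2=\lambda_r/(2B)$ and the declared $N=d\log(1/\epsilon_2)/\epsilon_2^2$ (respectively $N=d/\epsilon_2^3$ in the hypercontractive case), one checks $\log(d/\delta)=O(\log(1/\epsilon_2))$ since $\delta$ and $\epsilon_2$ are both $\mathrm{poly}(\epsilon)$; so the sample size is sufficient at every invocation.

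For $\Omega_2$, each time $\theta_h$ is updated, the regression samples $(s_h^i,a_h^i,y_i)$ are i.i.d.\ conditionally on the current $\{\theta_{h'}\}_{h'>h}$ and $\Pi_h$: the roll-in is $\pi\sim\mathrm{Unif}(\Pi_h)$, the action is drawn from the G-optimal design $\rho_{s_h}$, and the target is the on-the-go return of $\tilde\pi_h$. As set up in the proof of Theorem~\ref{thm:dsec}, this fits the linear model of Lemma~\ref{lemma:ridge} with bias variance $\eta\le \cvar\epsilon^2$ and bounded noise, so one application of Lemma~\ref{lemma:ridge} at scale $\epsilon_1$ gives~(\ref{eq:condition2}) with failure probability $\delta$, provided $N=\Omega(d\log(d/\delta)/\epsilon_1^2)$, which holds for the stated $N$. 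The case of $\Omega_3$ is identical: apply Lemma~\ref{lemma:ridgehyper} instead, whose sample requirement in terms of $\chyper$ and $d$ is also met by the larger choice of $N$ in the hypercontractive setting.

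A union bound over all $M_1$ events for $\Omega_1$ and $M_2$ events for $\Omega_2$ (respectively $\Omega_3$) then gives $\Pr[\Omega_1\cap\Omega_2]\ge 1-(M_1+M_2)\delta\ge 1-\epsilon$, and analogously for the hypercontractive case. The only real subtlety I anticipate is the adaptivity: the identity of the batches and even the number of updates depend on earlier random outcomes. I would resolve this by conditioning on the algorithm's history just before each batch, noting that at that point the fresh samples are drawn i.i.d.\ from a well-defined distribution so Lemmas~\ref{lemma:matrixchernoff}, \ref{lemma:ridge}, and \ref{lemma:ridgehyper} each apply conditionally with the stated probability, and a union bound over the (deterministically bounded) set of possible update instances then yields the claim without any measurability headache.
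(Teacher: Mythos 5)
Your overall strategy matches the paper's: apply Lemma~\ref{lemma:matrixchernoff} to each covariance update, Lemma~\ref{lemma:ridge} (resp.\ Lemma~\ref{lemma:ridgehyper}) to each ridge update, verify the declared $N$ meets each lemma's sample requirement, and union bound over all update instances, whose number is controlled via Lemma~\ref{lemma:policysize}. The only cosmetic difference is bookkeeping: you set a per-event budget $\delta=\epsilon/(2(M_1+M_2))$, while the paper uses $\epsilon^2$ per event (and $\epsilon/(2B)$ for the hypercontractive regression events) and multiplies by $B$ at the end; both are fine.

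There is one genuine subtlety you gloss over, and it is the one place the paper's proof does real work. You assert that the set of possible update instances is ``deterministically bounded'' by invoking Lemma~\ref{lemma:policysize}, but that lemma only guarantees $|\Pi_h|\le B$ \emph{under the event $\Omega_1$} --- i.e., the bound on the number of events is conditional on the very event whose probability you are trying to lower bound. Taking a union bound over ``all updates that occur'' is therefore circular as written: on the failure event, the algorithm could in principle add policies without bound, generating arbitrarily many opportunities to fail. The paper breaks the circularity with a stopping-time style argument: union bound only over the \emph{first} $B$ updates of each $\Sigma_h$ and $\theta_h$; on the event that none of those first $B$ updates violates the concentration bounds, Lemma~\ref{lemma:policysize} then guarantees $|\Pi_h|\le B$ at termination, so no further updates ever occur and the good event holds globally. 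Your conditioning-on-history remark handles the adaptivity of \emph{which} distribution each batch is drawn from (correctly), but not the adaptivity of \emph{how many} batches there are. The fix is one sentence, but your proof needs it.
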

\begin{proof}
Note that $N\ge\frac{d\log(1/\epsilon_2)}{\epsilon_2^2}$ where $\epsilon_2\le \frac{\epsilon^6}{d}$. Therefore, by Lemma~\ref{lemma:matrixchernoff}, each time $\Sigma_h$ is updated, (\ref{eq:condition1}) holds with probability at least $1-\epsilon^2$.

As for (\ref{eq:condition2}), note that $N\ge\frac{d\log(1/\epsilon_2)}{\epsilon_2^2}\gg \frac{d}{\epsilon_1^2}\cdot\log(\frac{d}{\epsilon^2})$. Thus by Lemma~\ref{lemma:ridge}, each time $\theta_h$ is updated, (\ref{eq:condition2}) holds with probability at least $1-\epsilon^2$.

Similarly, for (\ref{eq:condition3}), under the choice of parameters under Assumption~\ref{assumption:hypercontractive}, 
$N\ge \frac{d}{\epsilon_2^3}\gg \left(\frac{d}{\epsilon_2^2}+\frac{1}{\eta}\right)\ln\frac{2dB}{\epsilon}+\frac{2B}{\epsilon}$. Thus by Lemma~\ref{lemma:ridgehyper}, the probability that (\ref{eq:condition3}) is violated each step is at most $\epsilon/2B$.

Note that when the algorithm terminates, the $\Sigma_h$ and $\theta_h$ are updated at most $|\Pi_h|$ times. Also note that, if during the first $B$ updates, neither (\ref{eq:condition1}) nor (\ref{eq:condition2}) are violated, by Lemma~\ref{lemma:policysize} it follows that $|\Pi_h|\le B$ when the algorithm terminates. In other words,
\begin{equation*}
    \Pr[\Omega_1\cup \Omega_2]\ge 1-B\cdot 2\epsilon^2\ge 1-\epsilon.
\end{equation*}
Similarly, under Assumption~\ref{assumption:hypercontractive},
\begin{equation*}
    \Pr[\Omega_1\cup \Omega_3]\ge 1-B\cdot \epsilon^2-B\cdot\frac{\epsilon}{2B}\ge 1-\epsilon.
\end{equation*}
\end{proof}
\begin{lemma}[Distribution shift error checking]
\label{lemma:dsec}
Assume that $\epsilon_2<\min\{\frac{1}{2}\beta\lambda_r,\frac{\lambda_r}{2B}\}$. Consider the \texttt{if} clause when checking for $\pi_h\in\Pi_h$, i.e. when computing $\Vert \Sigma_{h'}^{-\frac{1}{2}}\hat\Sigma_{h'}\Sigma_{h'}^{-\frac{1}{2}}\Vert_2$. Define
\begin{equation*}
M_{1}:=\frac{\lambda_r}{N|\Pi_{h'}|}I+\E_{\pi\sim\text{Unif}(\Pi_{h'}),s_{h'}\sim\D^{\pi}_{h'},a_{h'}\sim \rho_{s_{h'}}}\left[\phi(s_{h'},a_{h'})\phi(s_{h'},a_{h'})^\top\right],    
\end{equation*}
and
\begin{equation*}
M_2:=\E_{s_{h'}\sim\D^{\tilde\pi_h}_{h'}, a_{h'}\sim \rho_{s_{h'}}}\left[\phi(s_{h'},a_{h'})\phi(s_{h'},a_{h'})^\top\right].
\end{equation*}
Then under the event $\Omega$ defined in Lemma~\ref{lemma:prob}, when $\Vert \Sigma_{h'}^{-\frac{1}{2}}\hat\Sigma_{h'}\Sigma_{h'}^{-\frac{1}{2}}\Vert_2\le \beta|\Pi_{h'}|$,
\begin{equation*}
\Vert M_{1}^{-1/2}M_2M_1^{-1/2}\Vert_2\le 3\beta|\Pi_{h'}|.
\end{equation*}
When $\Vert \Sigma_{h'}^{-\frac{1}{2}}\hat\Sigma_{h'}\Sigma_{h'}^{-\frac{1}{2}}\Vert_2\ge \beta|\Pi_{h'}|$,
\begin{equation*}
\Vert M_{1}^{-1/2}M_2M_1^{-1/2}\Vert_2\ge \frac{1}{4}\beta|\Pi_{h'}|.
\end{equation*}
\end{lemma}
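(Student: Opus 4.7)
The plan is to decouple the randomness in $\Sigma_{h'}$ and $\hat{\Sigma}_{h'}$ from their population counterparts, and then control how the inverse operator shifts under a small perturbation of its ``denominator.'' I will work on the event $\Omega_1$ from Lemma~\ref{lemma:prob} throughout. First I would note that $\Sigma_{h'}$ built in Algorithm~\ref{algo:layer} equals $\tfrac{\lambda_r}{|\Pi_{h'}|} I$ plus an empirical average of $N|\Pi_{h'}|$ i.i.d.\ outer products drawn from the mixture $\pi\sim\mathrm{Unif}(\Pi_{h'}),\,s\sim \D^{\pi}_{h'},\,a\sim \rho_s$, so by $\Omega_1$ we have $\|\Sigma_{h'}-M_1\|_2\le\epsilon_2$ (interpreting $M_1$ with the natural regularization term). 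Exactly the same Matrix Chernoff bound (Lemma~\ref{lemma:matrixchernoff}) applied to the $N$ samples collected on line 3 gives $\|\hat{\Sigma}_{h'}-M_2\|_2\le\epsilon_2$. Write $E_1:=\Sigma_{h'}-M_1$ and $E_2:=\hat{\Sigma}_{h'}-M_2$.

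Next I would convert the additive bound $\|E_1\|_2\le\epsilon_2$ into a multiplicative (Löwner) bound between $\Sigma_{h'}$ and $M_1$. Since $M_1\succeq \tfrac{\lambda_r}{|\Pi_{h'}|}I$ and the hypothesis $\epsilon_2<\tfrac{\lambda_r}{2B}$ together with $|\Pi_{h'}|\le B$ give $\epsilon_2 I\preceq \tfrac{1}{2}M_1$, I obtain
\begin{equation*}
\tfrac{1}{2}\,M_1\ \preceq\ \Sigma_{h'}\ \preceq\ \tfrac{3}{2}\,M_1,
\end{equation*}
which in particular yields $\|\Sigma_{h'}^{-1}\|_2\le 2|\Pi_{h'}|/\lambda_r$. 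I would then introduce the similarity transform $Q:=\Sigma_{h'}^{-1/2}M_1^{1/2}$, so that $Q^\top Q=\Sigma_{h'}^{-1/2}M_1\Sigma_{h'}^{-1/2}$ lies between $\tfrac{2}{3}I$ and $2I$ by the display above. This gives $\|Q\|_2\le\sqrt{2}$ and $\|Q^{-1}\|_2\le\sqrt{3/2}$. Writing
\begin{equation*}
\Sigma_{h'}^{-1/2} M_2\,\Sigma_{h'}^{-1/2}\ =\ Q\,\bigl(M_1^{-1/2}M_2 M_1^{-1/2}\bigr)\,Q^\top,
\end{equation*}
I can immediately conclude
\begin{equation*}
\tfrac{2}{3}\,\|M_1^{-1/2}M_2 M_1^{-1/2}\|_2\ \le\ \|\Sigma_{h'}^{-1/2}M_2\Sigma_{h'}^{-1/2}\|_2\ \le\ 2\,\|M_1^{-1/2}M_2 M_1^{-1/2}\|_2.
\end{equation*}

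The remaining step is to account for the $E_2$ term. Using $\|\Sigma_{h'}^{-1/2}E_2\Sigma_{h'}^{-1/2}\|_2\le \|\Sigma_{h'}^{-1}\|_2\|E_2\|_2\le \tfrac{2|\Pi_{h'}|}{\lambda_r}\cdot\epsilon_2$, the hypothesis $\epsilon_2<\tfrac{1}{2}\beta\lambda_r$ bounds this by $\beta|\Pi_{h'}|$. A triangle inequality then gives both conclusions: in the ``small'' case, $\|\Sigma_{h'}^{-1/2}M_2\Sigma_{h'}^{-1/2}\|_2\le \beta|\Pi_{h'}|+\beta|\Pi_{h'}|=2\beta|\Pi_{h'}|$, which combined with the factor-$3/2$ bound yields $\|M_1^{-1/2}M_2M_1^{-1/2}\|_2\le 3\beta|\Pi_{h'}|$; in the ``large'' case, the lower bound $\|\Sigma_{h'}^{-1/2}M_2\Sigma_{h'}^{-1/2}\|_2\ge (\beta-\beta)|\Pi_{h'}|$ is too weak, so here I would sharpen the perturbation step (either by exploiting the slack in $\epsilon_2<\tfrac{1}{2}\beta\lambda_r$ to extract a constant strictly less than one, or by choosing $\epsilon_2$ even smaller, e.g.\ $\epsilon_2\le\tfrac{3}{8}\beta\lambda_r/|\Pi_{h'}|$, so that the additive piece is at most $\tfrac{5}{8}\beta|\Pi_{h'}|$), giving $\|\Sigma_{h'}^{-1/2}M_2\Sigma_{h'}^{-1/2}\|_2\ge \tfrac{3}{8}\beta|\Pi_{h'}|$ and hence $\|M_1^{-1/2}M_2M_1^{-1/2}\|_2\ge \tfrac{2}{3}\cdot\tfrac{3}{8}\beta|\Pi_{h'}|=\tfrac{1}{4}\beta|\Pi_{h'}|$.

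The main obstacle, then, is not the structural argument but calibrating the additive perturbation from $E_2$ so that the lower bound in case~(b) survives after the factor-$2/3$ loss from the similarity transform. Everything else is routine: concentration (already captured by $\Omega_1$), the Löwner sandwich of $\Sigma_{h'}$ by $M_1$, and the conjugation identity for $Q$. The first bound is more forgiving and can tolerate the loose $\tfrac{1}{2}\beta\lambda_r$ choice of $\epsilon_2$; the second requires that the error be a sufficiently small fraction of $\beta|\Pi_{h'}|$ so that subtraction still leaves a constant fraction of $\beta|\Pi_{h'}|$.
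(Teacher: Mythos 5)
Your overall route is the same as the paper's: concentration of $\Sigma_{h'}$ and $\hat\Sigma_{h'}$ around $M_1$ and $M_2$ (the event $\Omega_1$ plus one more application of Lemma~\ref{lemma:matrixchernoff}), a two-sided L\"owner comparison between $\Sigma_{h'}$ and $M_1$, and a triangle inequality to absorb the perturbation $E_2$; your conjugation by $Q=\Sigma_{h'}^{-1/2}M_1^{1/2}$ is just a repackaging of the paper's $\frac{1}{2}\Sigma_{h'}\preccurlyeq M_1\preccurlyeq 2\Sigma_{h'}$. The upper bound goes through. The genuine gap is exactly where you flag it: in the lower-bound case your additive error is $\epsilon_2\Vert\Sigma_{h'}^{-1}\Vert_2$ with $\Vert\Sigma_{h'}^{-1}\Vert_2\le 2|\Pi_{h'}|/\lambda_r$, and under the stated hypothesis $\epsilon_2<\frac{1}{2}\beta\lambda_r$ this is only bounded by $\beta|\Pi_{h'}|$, which cancels the main term entirely. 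Your proposed repairs amount to strengthening the hypothesis on $\epsilon_2$ (and the arithmetic for $\epsilon_2\le\frac{3}{8}\beta\lambda_r/|\Pi_{h'}|$ gives an additive piece of $\frac{3}{4}\beta$, not the claimed $\frac{5}{8}\beta|\Pi_{h'}|$), so as written the proposal does not establish the lemma under its stated assumptions.

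The missing observation is that the factor of $2$ in your bound on $\Vert\Sigma_{h'}^{-1}\Vert_2$ is unnecessary: Algorithm~\ref{algo:layer} initializes $\Sigma_{h'}\gets\frac{\lambda_r}{|\Pi_{h'}|}I$ and only ever adds positive semidefinite rank-one terms, so $\Sigma_{h'}\succcurlyeq\frac{\lambda_r}{|\Pi_{h'}|}I$ holds deterministically and $\Vert\Sigma_{h'}^{-1}\Vert_2\le|\Pi_{h'}|/\lambda_r$, with no appeal to the sandwich against $M_1$. With this, $\epsilon_2\Vert\Sigma_{h'}^{-1}\Vert_2<\frac{1}{2}\beta|\Pi_{h'}|$, the triangle inequality leaves $\frac{1}{2}\beta|\Pi_{h'}|$, and your conjugation factor $\frac{2}{3}$ yields $\frac{1}{3}\beta|\Pi_{h'}|\ge\frac{1}{4}\beta|\Pi_{h'}|$ (the paper applies the perturbation after converting to the $M_1$ scale, using the looser factor $\frac{1}{2}$ twice, and lands exactly on $\frac{1}{2}\beta|\Pi_{h'}|-\frac{1}{4}\beta|\Pi_{h'}|=\frac{1}{4}\beta|\Pi_{h'}|$). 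Everything else in your argument matches the paper's proof.
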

\begin{proof}
By Lemma~\ref{lemma:matrixchernoff},
\begin{equation*}
    \Vert M_1-\Sigma_{h'}\Vert_2 \le \epsilon_2\le\frac{\lambda_r}{2B}\le\frac{1}{2}\lambda_{\min}\left(\Sigma_{h'}\right).
\end{equation*}
Thus $\frac{1}{2}\Sigma_{h'} \preccurlyeq M_1\preccurlyeq 2\Sigma_{h'}$. Also by Lemma~\ref{lemma:matrixchernoff}, $\Vert M_2-\hat\Sigma_{h'}\Vert_2\le \epsilon_2$. Therefore, if $\Vert \Sigma_{h'}^{-\frac{1}{2}}\hat\Sigma_{h'}\Sigma_{h'}^{-\frac{1}{2}}\Vert_2\ge \beta|\Pi_{h'}|$,
\begin{align*}
\Vert M_{1}^{-1/2}M_2M_1^{-1/2}\Vert_2&\ge \frac{1}{2}\Vert \Sigma_{h'}^{-1/2}M_2\Sigma_{h'}^{-1/2}\Vert_2\ge \frac{1}{2}\Vert \Sigma_{h'}^{-1/2}\hat\Sigma_{h'}\Sigma_{h'}^{-1/2}\Vert_2-\frac{1}{2}\epsilon_2\Vert\Sigma_{h'}^{-1}\Vert_2\\
&\ge \frac{1}{2}\beta|\Pi_{h'}|-\frac{1}{2}\epsilon_2\cdot\frac{|\Pi_{h'}|}{\lambda_r}\ge \frac{1}{4}\beta|\Pi_{h'}|.
\end{align*}
Similarly, when $\Vert \Sigma_{h'}^{-\frac{1}{2}}\hat\Sigma_{h'}\Sigma_{h'}^{-\frac{1}{2}}\Vert_2\le \beta|\Pi_{h'}|$,
\begin{align*}
\Vert M_{1}^{-1/2}M_2M_1^{-1/2}\Vert_2&\le 2\Vert \Sigma_{h'}^{-1/2}M_2\Sigma_{h'}^{-1/2}\Vert_2\le 2\Vert \Sigma_{h'}^{-1/2}\hat\Sigma_{h'}\Sigma_{h'}^{-1/2}\Vert_2+2\epsilon_2\Vert\Sigma_{h'}^{-1}\Vert_2\\
&\le 2\beta|\Pi_{h'}|+2\epsilon_2\cdot\frac{|\Pi_{h'}|}{\lambda_r}\le 3\beta|\Pi_{h'}|.
\end{align*}
\end{proof}

\begin{lemma}[Lemma A.6 in~\citet{du2019provably}]
\label{lemma:policysize}
Under the event $\Omega_1$ defined in Lemma~\ref{lemma:prob} $|\Pi_h|\le B$ for all $h\in [H]$.
\end{lemma}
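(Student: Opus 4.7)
The plan is a standard elliptical-potential / log-determinant argument. Fix a level $h$, write $\Sigma_h^\pi := \mathbb{E}_{s_h\sim\mathcal{D}_h^\pi,\,a_h\sim\rho_{s_h}}[\phi(s_h,a_h)\phi(s_h,a_h)^\top]$ for the population exploration covariance at level $h$ induced by $\pi$, and set $P_h := \lambda_r I + \sum_{\pi\in\Pi_h}\Sigma_h^\pi$. A direct computation shows $P_h = |\Pi_h|\cdot M_1$, where $M_1$ is the matrix defined in Lemma~\ref{lemma:dsec}, so quantities expressed through $M_1$ translate cleanly to $P_h$. I would then track the scalar potential $\Phi_h := \log\det(P_h)$ and argue (i) every insertion of a new policy into $\Pi_h$ increases $\Phi_h$ by an additive constant, while (ii) $\Phi_h$ obeys a uniform upper bound that grows only like $d\log(|\Pi_h|/\lambda_r)$; these two facts together cap $|\Pi_h|$.

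Each insertion into $\Pi_{h'}$ on Line 6 of Algorithm~\ref{algo:layer} is triggered by the empirical check $\|\Sigma_{h'}^{-1/2}\hat\Sigma_{h'}\Sigma_{h'}^{-1/2}\|_2 > \beta|\Pi_{h'}|$ with $\beta = 8$. Under the concentration event $\Omega_1$, Lemma~\ref{lemma:dsec} upgrades this to the population statement $\|M_1^{-1/2}\Sigma_{h'}^{\tilde\pi_h}M_1^{-1/2}\|_2 \ge \tfrac{1}{4}\beta|\Pi_{h'}|$; rescaling by $|\Pi_{h'}|$ (using $P_{h'}=|\Pi_{h'}|M_1$) yields $\|P_{h'}^{-1/2}\Sigma_{h'}^{\tilde\pi_h}P_{h'}^{-1/2}\|_2 \ge \beta/4 = 2$. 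The matrix determinant identity $\det(P_{h'} + \Sigma_{h'}^{\tilde\pi_h}) = \det(P_{h'})\det(I + P_{h'}^{-1/2}\Sigma_{h'}^{\tilde\pi_h}P_{h'}^{-1/2})$, together with $\det(I+M) \ge 1 + \lambda_{\max}(M)$ for PSD $M$, then gives $\Phi_{h'}^{\text{new}} \ge \Phi_{h'}^{\text{old}} + \log 3$.

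For the global bounds, initially $\Phi_h \ge \log\det(\lambda_r I) = d\log\lambda_r$, and since $\|\phi(\cdot,\cdot)\|_2 \le 1$ we have $\Sigma_h^\pi \preceq I$ for every $\pi$, whence $P_h \preceq (\lambda_r + |\Pi_h|)I$ and $\Phi_h \le d\log(\lambda_r + |\Pi_h|)$. Combined with the per-insertion growth, if $|\Pi_h|$ ever reached $n$ we would need $(n-1)\log 3 \le d\log(1 + n/\lambda_r)$; the left-hand side is linear in $n$ with slope $\log 3 > 1$, while the right-hand side grows only like $d\log(n/\lambda_r)$, so setting $n = B+1$ with $B = 2d\log(d/\lambda_r)$ violates the inequality, forcing $|\Pi_h| \le B$. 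The only genuinely delicate step is the upgrade from the empirical check to a multiplicative growth of the population potential, which is exactly the content of Lemma~\ref{lemma:dsec} and is why the statement must be conditioned on $\Omega_1$; the remaining determinantal calculation and the verification of the constant in $B$ are routine.
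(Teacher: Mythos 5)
Your proposal is correct and follows essentially the same route as the proof the paper relies on: the cited Lemma A.6 of \citet{du2019provably} is exactly this log-determinant potential argument (the paper itself summarizes it as ``a potential function argument''), with the concentration event $\Omega_1$ and Lemma~\ref{lemma:dsec} supplying the upgrade from the empirical trigger to a constant multiplicative growth of $\det(P_{h'})$, and the bound $\Sigma_h^\pi \preceq I$ capping the potential at $d\log(\lambda_r+|\Pi_h|)$. The choice $B = 2d\log(d/\lambda_r)$ then closes the counting exactly as you describe.
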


\section{Analysis of Ridge Regression under Hypercontractivity}
Recall that a distribution $\D$ is $(C,4)$-hypercontractive if $\forall v$,
\begin{align*}
    \E_{x\sim \D}[(x^\top v)^4]\le C\cdot \left(\E_{x\sim \D}[(x^\top v)^2]\right)^2.
\end{align*}
In this section we prove an strengthened version of Lemma~\ref{lemma:ridge} for hypercontractive distributions (Lemma~\ref{lemma:ridgehyper}), which may be of independent interest.

\begin{lemma}
\label{lemma:markov}
Let $x$ be a $d$-dimensional r.v. If the distribution of $x$ is $(C,4)$-hypercontractive and isotropic (i.e. $\E[xx^\top]=I$), then
$$\Pr[\Vert x\Vert_2>t]\le \frac{Cd^2}{t^4}.$$
\end{lemma}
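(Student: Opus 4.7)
The plan is to apply Markov's inequality to $\|x\|_2^4$ and bound the fourth moment using hypercontractivity applied to standard basis vectors.

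First, I would rewrite the tail event in terms of the fourth power and apply Markov:
\begin{equation*}
\Pr[\|x\|_2 > t] = \Pr[\|x\|_2^4 > t^4] \le \frac{\E[\|x\|_2^4]}{t^4}.
\end{equation*}
So it suffices to show $\E[\|x\|_2^4] \le C d^2$. Expanding $\|x\|_2^4 = (\sum_i x_i^2)^2 = \sum_{i,j} x_i^2 x_j^2$, by linearity of expectation:
\begin{equation*}
\E[\|x\|_2^4] = \sum_{i,j=1}^d \E[x_i^2 x_j^2].
\end{equation*}

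Next I would bound each cross term $\E[x_i^2 x_j^2]$ by Cauchy-Schwarz, reducing to fourth moments of coordinates: $\E[x_i^2 x_j^2] \le \sqrt{\E[x_i^4]\E[x_j^4]}$. Applying the $(C,4)$-hypercontractivity assumption with $v = e_i$ (the $i$-th standard basis vector) together with isotropy ($\E[x_i^2] = e_i^\top \E[xx^\top] e_i = 1$), we get
\begin{equation*}
\E[x_i^4] = \E[(x^\top e_i)^4] \le C \bigl(\E[(x^\top e_i)^2]\bigr)^2 = C.
\end{equation*}
Thus $\E[x_i^2 x_j^2] \le \sqrt{C \cdot C} = C$ for every pair $(i,j)$, and summing over $d^2$ pairs gives $\E[\|x\|_2^4] \le C d^2$. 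Combining with the Markov bound yields the claim.

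There is essentially no obstacle here; the only thing to be careful about is not to try to invoke hypercontractivity directly on $\|x\|_2^2$ (which is a quadratic form, not a linear form in $x$), and instead either (i) bound it coordinatewise via basis vectors and Cauchy-Schwarz, or equivalently (ii) write $\|x\|_2^4 = \sum_i x_i^2 \cdot \|x\|_2^2$ and apply hypercontractivity to each $x_i$. The coordinatewise route above is the cleanest.
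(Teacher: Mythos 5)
Your proof is correct, and it takes a genuinely different route from the paper's. Both arguments reduce to showing $\E[\Vert x\Vert_2^4]\le Cd^2$ and then applying Markov's inequality to $\Vert x\Vert_2^4$, but the fourth-moment bound is obtained differently. The paper uses a Gaussian averaging trick: for fixed $x$ and $v\sim N(0,I)$ one has $x^\top v\sim N(0,\Vert x\Vert^2)$, hence $\E_v[(x^\top v)^4]=3\Vert x\Vert^4$; swapping the order of expectation, applying hypercontractivity for each fixed $v$, and using isotropy to get $\E_x[(x^\top v)^2]=\Vert v\Vert^2$ reduces the problem to computing $\E_v[\Vert v\Vert^4]=d^2+2d\le 3d^2$. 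You instead expand $\Vert x\Vert_2^4=\sum_{i,j}x_i^2x_j^2$, apply Cauchy--Schwarz to each cross term, and invoke hypercontractivity only at the standard basis vectors $e_i$. Your version is more elementary (no Gaussian moment identities) and actually uses a weaker hypothesis---hypercontractivity in the $d$ coordinate directions suffices---while the paper's basis-free argument yields the marginally sharper constant $(d^2+2d)/3$ in place of $d^2$; both comfortably establish the stated bound $Cd^2/t^4$.
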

\begin{proof}
Consider a Gaussian random vector $v\sim N(0,I)$. Then
$$\E_v[(x^\top v)^4]=\Vert x\Vert^4 \cdot \E_{\xi \sim N(0,1)}\xi^4 = 3\Vert x\Vert^4.$$
Therefore
\begin{align*}
    \E_{x}[\Vert x\Vert^4]&=\frac{1}{3}\E_{x,v}[(x^\top v)^4]\le \frac{C}{3}\E_v\left(\E_{x}(x^\top v)^2\right)^2\\
    &\le \frac{C}{3}\E_v\Vert v\Vert^4 = \frac{C\cdot (d^2+2d)}{3}\le d^2C.
\end{align*}
The claim then follows from Markov's inequality.
\end{proof}

\begin{lemma}
\label{lemma:hypertail}
If the $x_1,\cdots,x_n$ are i.i.d. samples from a $(C,4)$-hypercontractive distribution. Let $\sigma(\cdot)$ denote the decreasing order of $\Vert x_i\Vert_2$. Then with probability $1-\delta$, 
\begin{equation*}
    \sum_{k=1}^m \Vert x_{\sigma(k)}\Vert_2=3\delta^{-1/4}n^{1/4}m^{3/4}C^{1/4}d^{1/2}.
\end{equation*}
\end{lemma}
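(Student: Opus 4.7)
The plan is to combine the fourth-moment bound on $\|x\|$ implicit in Lemma~\ref{lemma:markov} with Markov's inequality and Hölder's inequality on the ordered norms.

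First, I would revisit the proof of Lemma~\ref{lemma:markov} and extract the intermediate moment bound $\E_x[\|x\|_2^4] \le C d^2$ (which actually equals $\tfrac{C(d^2+2d)}{3}$ in that argument). This is the only distributional fact I need. Summing over the $n$ i.i.d. samples gives
\begin{equation*}
\E\Big[\sum_{i=1}^n \|x_i\|_2^4\Big] \le n C d^2.
\end{equation*}
By Markov's inequality, with probability at least $1-\delta$,
\begin{equation*}
\sum_{i=1}^n \|x_i\|_2^4 \le \frac{n C d^2}{\delta}.
\end{equation*}

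Next, I would use Hölder's inequality (with exponents $4$ and $4/3$) applied to the top-$m$ order statistics:
\begin{equation*}
\sum_{k=1}^m \|x_{\sigma(k)}\|_2 \le m^{3/4}\Big(\sum_{k=1}^m \|x_{\sigma(k)}\|_2^4\Big)^{1/4}.
\end{equation*}
Since the order statistics of the fourth powers coincide with the fourth powers of the order statistics of the norms, the sum inside is bounded by $\sum_{i=1}^n \|x_i\|_2^4$, so under the high-probability event above,
\begin{equation*}
\sum_{k=1}^m \|x_{\sigma(k)}\|_2 \le m^{3/4}\Big(\frac{nCd^2}{\delta}\Big)^{1/4} = \delta^{-1/4} n^{1/4} m^{3/4} C^{1/4} d^{1/2}.
\end{equation*}
The constant $3$ in the claimed bound is slack that absorbs the $(d^2+2d)/3 \le d^2$ simplification (and leaves room for non-isotropic rescaling if invoked elsewhere).

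I do not anticipate a genuine obstacle: the only subtle point is pairing the Hölder step with the correct observation that the top-$m$ sum of $\|x\|$ is controlled by the top-$m$ (hence by the full) sum of $\|x\|^4$, and that hypercontractivity is used only through the fourth-moment bound on $\|x\|_2$ that was already established in Lemma~\ref{lemma:markov}'s proof. If the ambient distribution is not assumed isotropic in the ridge-regression application, one would first whiten by the population covariance; this only changes constants and the bound $d^2 C$ on $\E\|x\|^4$ is preserved up to the operator norm of the covariance, which is $O(1)$ under the paper's regularity conventions.
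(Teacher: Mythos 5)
Your proof is correct, and it takes a genuinely different route from the paper's. The paper proves the lemma via per-order-statistic tail bounds: for each $k\le m$ it bounds $\Pr\bigl[\Vert x_{\sigma(k)}\Vert_2>t_k\bigr]\le\binom{n}{k}\Pr[\Vert x\Vert_2>t_k]^k$ using the pointwise tail $\Pr[\Vert x\Vert_2>t]\le Cd^2/t^4$ from Lemma~\ref{lemma:markov}, chooses $t_k=\alpha(Cd^2n/k)^{1/4}$ with $\alpha=(2e/\delta)^{1/4}$ so that the $k$-th failure probability is $(\delta/2)^k$, and then union-bounds and sums $\sum_k k^{-1/4}\le\frac{4}{3}m^{3/4}$. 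You instead use only the moment bound $\E\Vert x\Vert_2^4\le Cd^2$ (the intermediate step of Lemma~\ref{lemma:markov}, which, like the paper's argument, requires isotropy --- satisfied in the application since the lemma is applied to the whitened vectors $z_i=\Sigma^{-1/2}x_i$), apply Markov once to the empirical fourth-moment sum, and convert $\ell_4$ control of the top $m$ norms into $\ell_1$ control via H\"older with exponents $(4,4/3)$. Both arguments yield the same $\delta^{-1/4}n^{1/4}m^{3/4}C^{1/4}d^{1/2}$ rate; yours is shorter, avoids the binomial/union-bound bookkeeping, and does not even need the leading constant $3$, while the paper's approach additionally delivers simultaneous high-probability bounds on each individual order statistic $\Vert x_{\sigma(k)}\Vert_2\lesssim(Cd^2n/k)^{1/4}$, information that is not needed for the downstream use in Lemma~\ref{lemma:ridgehyper}.
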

\begin{proof}
Fix $k\in [m]$. Set $t=\alpha\left(\frac{Cd^2n}{k}\right)^{1/4}$. By Lemma~\ref{lemma:markov}, 
\begin{align*}
    \Pr[\Vert x_{\sigma(k)}\Vert_2>t]&\le \binom{n}{k}\Pr[\Vert x\Vert>t]^k \le\binom{n}{k}\cdot\left(\frac{Cd^2}{t^4}\right)^k\\
		&\le\frac{n^k}{k!}\cdot\frac{k^k}{\alpha^{4k}n^k}\le\left(\frac{e}{\alpha^4}\right)^k.
\end{align*}
Choosing $\alpha=\left(\frac{2e}{\delta}\right)^{1/4}$ gives $\Pr[\Vert x_{\sigma(k)}\Vert_2>t]\le (\delta/2)^k$. By a union bound, with probability $1-\delta$,
\begin{equation*}
    \sum_{i=1}^m \Vert x_{\sigma(i)}\Vert_2 \le \sum_{k=1}^m (2e/\delta)^{1/4}\left(\frac{Cd^2n}{k}\right)^{1/4}\le  3\delta^{-1/4}n^{1/4}m^{3/4}C^{1/4}d^{1/2}.
\end{equation*}

\end{proof}

\begin{lemma}[Lemma 3.4~\citet{bakshi2020robust}]
If $\D$ is $(C,4)$-hypercontractive and $x_1,\cdots,x_n$ are i.i.d. samples drawn from $\D$. Let $\Sigma:=\E_{x\sim\D}[xx^\top]$. With probability $1-\delta$,
\begin{equation*}
    \left(1-\frac{Cd^2}{\sqrt{n\delta}}\right)\Sigma \preccurlyeq\frac{1}{n}\sum_{i=1}^n x_i x_i^\top\preccurlyeq \left(1+\frac{Cd^2}{\sqrt{n\delta}}\right)\Sigma.
\end{equation*}
\end{lemma}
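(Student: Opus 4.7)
My approach is to reduce to the isotropic case and then apply a second moment (Chebyshev) argument to the Frobenius norm of the empirical covariance deviation.

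\textbf{Step 1: Isotropic reduction.} I would first assume, without loss of generality, that $\Sigma = I$. To justify this, define $y_i := \Sigma^{-1/2} x_i$, so that $\E[y y^\top] = I$. Crucially, hypercontractivity is preserved under this linear transformation, since for any $v \in \mathbb{R}^d$,
\begin{equation*}
\E[(y^\top v)^4] = \E[(x^\top \Sigma^{-1/2} v)^4] \le C\bigl(\E[(x^\top \Sigma^{-1/2} v)^2]\bigr)^2 = C\bigl(\E[(y^\top v)^2]\bigr)^2.
\end{equation*}
The desired two-sided inequality $(1-\epsilon)\Sigma \preccurlyeq \tfrac{1}{n}\sum x_ix_i^\top \preccurlyeq (1+\epsilon)\Sigma$ is equivalent (after multiplying left and right by $\Sigma^{-1/2}$) to $\|\tfrac{1}{n}\sum y_i y_i^\top - I\|_{\mathrm{op}} \le \epsilon$, with the same $\epsilon = Cd^2/\sqrt{n\delta}$.

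\textbf{Step 2: Second moment of the Frobenius deviation.} Let $S_n := \tfrac{1}{n}\sum_{i=1}^n y_iy_i^\top$. Since $y_iy_i^\top - I$ are i.i.d. mean-zero random matrices, I would compute
\begin{equation*}
\E\|S_n - I\|_F^2 = \frac{1}{n}\,\E\|yy^\top - I\|_F^2 = \frac{1}{n}\Bigl(\E\|y\|_2^4 - 2\E\|y\|_2^2 + d\Bigr) = \frac{1}{n}\bigl(\E\|y\|_2^4 - d\bigr),
\end{equation*}
using $\|yy^\top\|_F^2 = \|y\|_2^4$ and $\E\|y\|_2^2 = \mathrm{Tr}(I) = d$.

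\textbf{Step 3: Controlling $\E\|y\|_2^4$ via hypercontractivity.} This is exactly the bound established in the first display of the proof of Lemma~\ref{lemma:markov}: by the Gaussian integration trick with $v \sim N(0,I)$, $\E\|y\|_2^4 = \tfrac{1}{3}\E_{y,v}[(y^\top v)^4] \le \tfrac{C}{3}\E_v\bigl(\E_y(y^\top v)^2\bigr)^2 = \tfrac{C}{3}\E\|v\|_2^4 = \tfrac{C(d^2+2d)}{3} \le Cd^2$ (for $d\ge 1$). Combining, $\E\|S_n - I\|_F^2 \le Cd^2/n$.

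\textbf{Step 4: Markov and operator norm bound.} By Markov's inequality applied to $\|S_n - I\|_F^2$, with probability at least $1 - \delta$,
\begin{equation*}
\|S_n - I\|_{\mathrm{op}} \le \|S_n - I\|_F \le \sqrt{\frac{Cd^2}{n\delta}} \le \frac{Cd^2}{\sqrt{n\delta}},
\end{equation*}
where the final inequality uses $C \ge 1$ and $d \ge 1$ so that $\sqrt{C}\cdot d \le Cd^2$. Translating back via Step~1 yields the stated two-sided bound on $\tfrac{1}{n}\sum x_ix_i^\top$ relative to $\Sigma$.

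\textbf{Main obstacle.} The analysis is largely routine once the isotropic reduction is made; the only subtlety is verifying that hypercontractivity is preserved under $x \mapsto \Sigma^{-1/2}x$ (handled in Step~1) and correctly extracting the $Cd^2$ bound on $\E\|y\|_2^4$ from the argument already present in Lemma~\ref{lemma:markov}. The stated rate $Cd^2/\sqrt{n\delta}$ is somewhat loose --- the Frobenius argument actually yields $\sqrt{C}\,d/\sqrt{n\delta}$ --- but the weaker stated form is what we need and follows immediately.
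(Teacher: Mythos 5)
Your proof is correct. Note that the paper itself does not prove this lemma --- it is imported verbatim from \citet{bakshi2020robust} --- so there is no in-paper argument to compare against; your write-up is a valid self-contained derivation, and it follows what is essentially the standard route for such statements: whiten, bound the second moment of the Frobenius deviation of the empirical covariance using the fourth-moment (hypercontractivity) control, and apply Markov/Chebyshev together with $\|\cdot\|_{\mathrm{op}}\le\|\cdot\|_F$. All the individual steps check out: hypercontractivity is indeed invariant under the whitening map, the cross terms in $\E\|S_n-I\|_F^2$ vanish by independence and mean-zero-ness, and the Gaussian integration trick gives $\E\|y\|_2^4\le \tfrac{C}{3}(d^2+2d)\le Cd^2$. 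Two minor remarks: (i) the hypothesis $C\ge 1$ that you invoke in the last step is automatic, since $\E[(y^\top v)^4]\ge(\E[(y^\top v)^2])^2$ forces any valid hypercontractivity constant to be at least $1$; (ii) if $\Sigma$ is singular you should interpret $\Sigma^{-1/2}$ as the pseudo-inverse restricted to the range of $\Sigma$, on which the samples are supported almost surely --- this does not affect the conclusion. As you observe, your argument actually proves the sharper deviation $\sqrt{C}\,d/\sqrt{n\delta}$, which dominates the stated $Cd^2/\sqrt{n\delta}$ form, so the lemma as quoted follows a fortiori.
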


\begin{lemma}[Risk bound for ridge regression with hypercontractivity]
\label{lemma:ridgehyper}
Suppose that $(x_1,y_1)$, $\cdots$, $(x_N,y_N)$ are i.i.d. data drawn from $\D$ with 
$$y_i=\theta^\top x_i+b_i+\xi_i,$$
where $\Pr[ b_i\neq 0]\le \eta $, $\Vert b\Vert_\infty\le 1$, $|\xi_i|\le 1$, and $\E[\xi_i]=0$. Assume that distribution of $x$ is $(C,4)$-hypercontractive (see Assumption~\ref{assumption:hypercontractive}). Let the ridge regression estimator be
\begin{equation*}
\hat\theta=\left(\sum_{i=1}^N x_ix_i^\top + N\lridge\cdot I\right)^{-1}\cdot\sum_{i=1}^N x_iy_i.
\end{equation*}
If $N=\Omega\left((\frac{d}{\epsilon_N^2}+\frac{1}{\eta})\log(\frac{d}{\delta})+\frac{1}{\delta}\right)$, then with probability at least $1-\delta$,
\begin{equation*}
    \E_{x\sim\D}\left[\left((\hat\theta-\theta)^\top x\right)^2\right] \le 8\left(\epsilon_N+\lridge\right)+288\eta^{1.5}C^{2.5}d^{4.5}\delta^{-0.5}.
\end{equation*}
\end{lemma}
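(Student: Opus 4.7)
}
The natural plan is to decompose the ridge estimator into a ``clean'' part (no bias) and a ``corruption'' part driven by the sparse bias $b_i$, then bound each piece separately. Write $M := \frac{1}{N}\sum_{i=1}^N x_i x_i^\top + \lridge I$ and
\begin{equation*}
\hat\theta - \theta \;=\; \underbrace{M^{-1}\Bigl(\tfrac{1}{N}\!\sum_i x_i\xi_i - \lridge\theta\Bigr)}_{=:\,\Delta_{\mathrm{clean}}} \;+\; \underbrace{M^{-1}\Bigl(\tfrac{1}{N}\!\sum_i x_i b_i\Bigr)}_{=:\,\Delta_{\mathrm{bias}}}.
\end{equation*}
By $(a+b)^2\le 2a^2+2b^2$, it suffices to bound each term in $\Sigma$-norm, where $\Sigma:=\E[xx^\top]$.

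For $\Delta_{\mathrm{clean}}$, I would simply apply Lemma~\ref{lemma:ridge} with $\eta=0$: this gives $\|\Delta_{\mathrm{clean}}\|_\Sigma^2\le 4(\epsilon_N+\lridge)$ whenever $N=\Omega(\frac{d}{\epsilon_N^2}\log\frac{d}{\delta})$, which accounts for the $8(\epsilon_N+\lridge)$ term in the final bound. The hypercontractive assumption is not needed here.

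The substantive work is in bounding $\|\Delta_{\mathrm{bias}}\|_\Sigma^2$. Let $S:=\{i:b_i\neq 0\}$ so that $\frac{1}{N}\sum_i x_i b_i = \frac{1}{N}\sum_{i\in S}b_i x_i$. First, by a Chernoff bound on $|S|\sim \mathrm{Bin}(N,\eta)$, we have $|S|\le 2\eta N$ with probability $1-\delta/3$ provided $\eta N = \Omega(\log(1/\delta))$. Next, for any $w\in\R^d$, I would use the Hölder-type split
\begin{equation*}
w^\top\!\Bigl(\tfrac{1}{N}\!\sum_{i\in S} b_i x_i\Bigr) \;\le\; \frac{|S|^{3/4}}{N}\Bigl(\sum_{i=1}^N (w^\top x_i)^4\Bigr)^{1/4},
\end{equation*}
and then use $(C,4)$-hypercontractivity together with Markov/Chebyshev on the fourth moment to obtain $\frac{1}{N}\sum_i(w^\top x_i)^4 \le O(C\,\delta^{-1/2})(w^\top\Sigma w)^2$ with probability $1-\delta/3$. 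Setting $w = \Delta_{\mathrm{bias}}$ and using that $M^{-1}$ satisfies $\Delta_{\mathrm{bias}}^\top M\Delta_{\mathrm{bias}} = \Delta_{\mathrm{bias}}^\top(\frac{1}{N}\sum_{i\in S}b_i x_i)$, combined with $\hat\Sigma_N\succeq \tfrac{1}{2}\Sigma$ from the Bakshi-Kothari covariance concentration (which holds w.p. $1-\delta/3$ provided $N\ge 4C^2d^4/\delta$), I can absorb $\frac{1}{2}w^\top\Sigma w\le w^\top M w$ on the left side. Solving the resulting self-bounding inequality gives $\|\Delta_{\mathrm{bias}}\|_\Sigma^2 \le O(|S|^{3/2}N^{-3/2}\cdot C^{1/2}\delta^{-1/2}\cdot \mathrm{poly}(d))$, which after plugging in $|S|\le 2\eta N$ produces the $\eta^{3/2}$ dependence in the lemma. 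The factors of $C$ and $d$ in the final $C^{5/2}d^{9/2}$ come from the slack in the covariance concentration step and from conservative bookkeeping of the hypercontractive fourth-moment concentration.

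The main obstacle is the self-bounding step in the bias analysis: one has to choose the exponent in Hölder's inequality (here $4/3$ against $4$) precisely so that the fourth-moment factor can be controlled by hypercontractivity while still retrieving $\|\Delta_{\mathrm{bias}}\|_\Sigma$ on the right-hand side, and then simultaneously handle the regularization term $\lridge\|\Delta_{\mathrm{bias}}\|^2$ so that it does not ruin the inequality. The other delicate point is that the fourth-moment ``concentration'' only holds in a Chebyshev-like sense (hence the $\delta^{-1/2}$ rather than $\log(1/\delta)$ in the statement), and tracking this cleanly through the argument requires union-bounding three separate high-probability events (sparsity of $S$, covariance concentration, and fourth-moment concentration) each with failure probability $\delta/3$.
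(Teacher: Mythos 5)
Your decomposition of $\hat\theta-\theta$ into a clean part and a bias part, and your use of Lemma~\ref{lemma:ridge} for the clean part, match the paper exactly. For the bias term, however, you take a different route from the paper (which bounds $\Vert\Delta_{\mathrm{bias}}\Vert_\Sigma$ by the triangle inequality, reducing to the operator norm $\Vert\Sigma^{1/2}\hat\Sigma^{-1}\Sigma^{1/2}\Vert_2$ times the sum of the norms of the $|S|$ corrupted normalized points $z_i=\Sigma^{-1/2}x_i$, controlled via an order-statistics tail bound derived from hypercontractivity), and your route as written has a genuine gap. The step
\begin{equation*}
\frac{1}{N}\sum_{i=1}^N (w^\top x_i)^4 \;\le\; O(C\,\delta^{-1/2})\,(w^\top\Sigma w)^2
\end{equation*}
is obtained by Markov/Chebyshev for a \emph{fixed} direction $w$, but you then instantiate it at $w=\Delta_{\mathrm{bias}}$, which is a function of the entire sample. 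You need this inequality uniformly over all $w\in\R^d$, i.e.\ a bound on the empirical hypercontractivity ratio $\sup_w \frac{1}{N}\sum_i(w^\top x_i)^4/(w^\top\Sigma w)^2$, and a pointwise Markov argument does not deliver that. Moreover, even pointwise, Markov on the nonnegative variable $\frac{1}{N}\sum_i(w^\top x_i)^4$ gives a $\delta^{-1}$ rate, and Chebyshev would require an eighth-moment assumption that is not available; the claimed $\delta^{-1/2}$ does not follow.

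The gap is repairable within your framework: bound the supremum crudely by $\sup_{\Vert u\Vert=1}\frac{1}{N}\sum_i(u^\top z_i)^4\le \frac{1}{N}\sum_i\Vert z_i\Vert_2^4$ with $z_i=\Sigma^{-1/2}x_i$, and note that $\E\Vert z\Vert_2^4\le Cd^2$ (this is exactly the Gaussian-test-vector computation in Lemma~\ref{lemma:markov}), so Markov gives $\frac{1}{N}\sum_i\Vert z_i\Vert_2^4\le Cd^2/\delta$ with probability $1-\delta$. Feeding this into your H\"older/self-bounding step (which is otherwise correct: $\Delta_{\mathrm{bias}}^\top M\Delta_{\mathrm{bias}}=\Delta_{\mathrm{bias}}^\top(\frac{1}{N}\sum_{i\in S}b_ix_i)$, $|S|\le 2\eta N$ by Chernoff, and $M\succeq\hat\Sigma\succeq\frac{1}{2}\Sigma$ by the covariance concentration lemma, so the $\lridge$ term causes no trouble) yields $\Vert\Delta_{\mathrm{bias}}\Vert_\Sigma^2=O(\eta^{3/2}C^{1/2}d\,\delta^{-1/2})$, which is in fact sharper in $C$ and $d$ than the paper's $\eta^{1.5}C^{2.5}d^{4.5}\delta^{-0.5}$. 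So the architecture is sound and arguably cleaner than the paper's, but the fourth-moment concentration step must be made uniform over directions before the argument is complete.
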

\begin{proof}
Define $\hat\Sigma:=\frac{1}{N}\sum_{i=1}^N x_ix_i^\top$ and $\Sigma:=\E_{x\sim\D}[xx^\top]$. Then
\begin{align*}
    \hat\theta &= \frac{1}{N}\left(\lridge I+\hat\Sigma\right)^{-1}\sum_{i=1}^N \left(x_ix_i^\top \theta+x_i\cdot \xi_i+x_i\cdot b_i\right)\\
    &= \underbrace{\frac{1}{N}\left(\lridge I+\hat\Sigma\right)^{-1}\sum_{i=1}^N b_i x_i}_{(a)}+\underbrace{\frac{1}{N}\left(\lridge I+\hat\Sigma\right)^{-1}\sum_{i=1}^N\left(x_ix_i^\top \theta+x_i\cdot \xi_i\right)}_{(b)}.
\end{align*}
By Lemma~\ref{lemma:ridge}, $\Vert \theta - (b)\Vert_{\Sigma}^2 \le 4(\epsilon_N+\lridge)$. It remains to bound the $\Vert \cdot\Vert_{\Sigma}$ norm of $(a)$. 

First, by Hoeffding's inequality, with probability $1-\delta$, $\Vert b\Vert_0=\sum_{i=1}^n I[b_i\neq 0]\le 2\eta N$. Define $z_i:=\Sigma^{-1/2}x_i$ to be the normalized input. It can be seen that $\E[z_iz_i^\top]=I$ and that the distribution of $z_i$ is also hypercontractive. By Lemma~\ref{lemma:hypertail}, with probability $1-2\delta$,
\begin{align*}
    \sum_{i=1}^n \Vert z_i\Vert_2\cdot I[b_i\neq 0] \le 3\delta^{-1/4}N^{1/4}(2\eta N)^{3/4}(Cd^2)^{1/4}.
\end{align*}
It follows that with probability $1-2\delta$,
\begin{align*}
    \Vert(a)\Vert_\Sigma=\frac{1}{N}\left\Vert \hat\Sigma^{-1}\sum_{i=1}^N x_i b_i\right\Vert_{\Sigma}&\le  \frac{1}{N}\sum_{i=1}^N\Vert\Sigma^{1/2}{\hat\Sigma}^{-1} x_i b_i\Vert_2\\
	&= \frac{1}{N}\sum_{i=1}^N\Vert\Sigma^{1/2}\hat\Sigma^{-1}\Sigma^{1/2} z_i b_i\Vert_2\\
	&\le  \frac{1}{N}\Vert\Sigma^{1/2}\hat\Sigma^{-1}\Sigma^{1/2}\Vert_2 \cdot  \sum_{i=1}^N \Vert z_i\Vert_2\cdot H\cdot I[b_i\neq 0] \\
	&\le 3H\left(1+\frac{Cd^2}{\sqrt{N\delta}}\right)\cdot  \delta^{-1/4}N^{-3/4}(2\eta N)^{3/4}(Cd^2)^{1/4}\\
	&\le 12H\eta^{0.75}\cdot C^{\frac{5}{4}}d^{\frac{9}{4}}\delta^{-\frac{1}{4}}.
\end{align*}
Therefore
\begin{align*}
\Vert \hat\theta-\theta\Vert_{\Sigma}^2 &\le  2\Vert \hat\theta-(b)\Vert_{\Sigma}^2+2\Vert (a)\Vert_{\Sigma}^2\\
&\le 8(\epsilon_N+\lridge)+288\eta^{1.5}C^{2.5}d^{4.5}\delta^{-0.5}.
\end{align*}
\end{proof}

\end{document}